\documentclass[journal]{IEEEtran}
\usepackage{amsthm}
\usepackage{comment}
\usepackage{amsmath} 
\usepackage{amssymb}  
\usepackage{amsfonts} 
\usepackage{booktabs}
\usepackage{cases}
\usepackage{color}
\usepackage{epstopdf}
\usepackage{graphicx}
\usepackage{multirow}
\usepackage{multicol}
\usepackage{makecell}
\usepackage{subeqnarray}
\usepackage{tabularx}
\usepackage{threeparttable}
\usepackage[switch]{lineno}
\usepackage{epsfig,amssymb,bm,dsfont}
\usepackage{indentfirst}
\usepackage{cuted}
\usepackage{booktabs}
\usepackage{MnSymbol}
\usepackage{pifont}
\newtheorem{theorem}{Theorem}
\newtheorem{lemma}{Lemma}
\newtheorem{proposition}{Proposition}
\newtheorem{remark}{Remark}

\newtheorem{corollary}{Corollary}
\usepackage[caption=false]{subfig}
\ifCLASSINFOpdf
\else
\fi

\usepackage{algorithm}  
\usepackage{algorithmic} 
\begin{document}


\title{
Learning Predictive Control with Deep Koopman Operators for Autonomous Vehicle Motion Planning 
}

\author{
 Xinglong Zhang, Yongqian Xiao, Haotian Cao,  Xing Zhou, Xin Yin, Xin Xu
	
\thanks{\textcolor{black}{This work was supported in part by the National Natural Science Foundation of China under Grants T2521006, 62533021, and U24A20279; in part by Science and Technology Innovation Program of Hunan Province under Grant 2024RC3145.}.
}
}
\markboth{}
{Shell \MakeLowercase{\textit{et al.}}: Bare Demo of IEEEtran.cls for IEEE Journals}
\maketitle

\begin{abstract}
Model Predictive Control (MPC) is widely used for autonomous-vehicle (AV) motion planning, but its real-time applicability is often limited by the need for accurate models and online solution of nonlinear, nonconvex optimization problems in dynamic road environments. Actor--critic reinforcement learning offers a promising alternative for online policy generation, yet its policy-learning process often lacks explicit control-theoretic structure. This article proposes a learning predictive control (LPC) framework with deep Koopman operators for efficient real-time motion planning under nonconvex constraints. To address nonlinear and uncertain vehicle dynamics, a deep-Koopman-based predictor is used to lift the system into an interpretable linear observable space in a data-driven manner. Unlike traditional MPC, which computes open-loop control sequences, the proposed LPC framework yields a closed-loop state-feedback policy within each prediction interval through receding-horizon actor--critic learning. \textcolor{black}{To ensure safety under nonconvex environmental constraints, LPC constructs convex local surrogate representations of obstacles and defines corresponding potential-field functions. These functions and their gradients are directly embedded into the actor–critic structure, enabling efficient, safety-aware policy learning.} Extensive simulations and real-world experiments on the HongQi-EHS3 platform demonstrate favorable performance in diverse obstacle-avoidance scenarios in terms of safety, computational efficiency, and driving comfort, compared with benchmark methods such as CBF-MPC and LMPCC.

\textbf{\textit{Note to Practitioners}}---\textbf{This paper addresses real-time motion planning for autonomous vehicles in dynamic road environments, where conventional MPC can be limited by model dependence and the need to solve nonlinear, nonconvex optimization problems online. The proposed learning predictive control framework combines deep Koopman modeling with receding-horizon actor--critic learning, so that vehicle dynamics can be predicted in a lifted linear space and motion policies can be updated online in closed-loop form. In addition, road-boundary and obstacle information are incorporated directly into the policy-learning process through safety shaping and gradient embedding, rather than being handled only by a separate correction module. For practitioners, this provides a computationally efficient and safety-aware motion-planning scheme suitable for onboard implementation. Simulations and real-vehicle experiments on the HongQi-EHS3 platform show favorable safety, comfort, and real-time performance in representative obstacle-avoidance tasks, including straight-road, curved-road, and mixed-surface scenarios. Current validation focuses on representative operating conditions, and further extension to more diverse dynamic-traffic scenarios and broader road/surface conditions remains an important next step.}

\end{abstract}

\begin{IEEEkeywords}
	\textcolor{black}{Model Predictive Control, deep Koopman operators, actor-critic reinforcement learning, motion planning, autonomous vehicles.}
\end{IEEEkeywords}

\IEEEpeerreviewmaketitle
\section{Introduction} 
\IEEEPARstart
{I}{n} recent years, autonomous driving technology has advanced rapidly in structured and semi-structured environments, such as public roads, smart ports, and unmanned mining fields. Nevertheless, improving the real-time maneuverability of autonomous vehicles (AVs) in dynamic environments remains a major challenge, and stronger environmental adaptability and learning capability are still essential \cite{zhang2022receding, liu2018convex}. A typical autonomous driving system integrates perception, navigation, decision-making, trajectory planning, and tracking control \cite{kuutti2018survey}, among which trajectory planning plays a central role because it directly affects both maneuverability and safety.

Among trajectory planning methods, optimization-based planners aim to generate safe future motions by minimizing a prescribed cost subject to safety-related constraints. A representative example is Model Predictive Control (MPC), which exploits predictive information over a finite horizon while accounting for vehicle dynamics and obstacle avoidance constraints~\cite{chen2019autonomous, zeng2021safety}. However, MPC performance depends strongly on model accuracy and may deteriorate when simplified or linearized vehicle models fail to capture complex dynamics, thereby limiting adaptability across varying conditions~\cite{liu2018convex, brito2019model}. In addition, the resulting nonlinear and non-convex optimization problems are often computationally demanding and susceptible to solver instability. Reinforcement learning (RL), by contrast, offers a data-driven way to learn complex control policies directly, but existing RL methods still face challenges in sample efficiency, interpretability, and safety assurance~\cite{liu2020multi, huang2016lateral}. {Ghalamzan Esfahani, Amir M.Recent studies have advanced online policy generation by integrating reinforcement learning within a receding-horizon framework, enabling efficient online policy updates~\cite{zhang2022robustLPC,dong2018functional,zhang2022receding}. However, designing an online actor–critic learning mechanism that ensures safety under complex environmental constraints remains an open challenge, as it requires balancing learning efficiency, adaptability, and safety.} 

In motion planning under unknown environments, a high-fidelity yet tractable dynamic model is essential for achieving safe and efficient control under constraints. In this context, recent advances in Koopman operator theory provide a promising avenue for reducing both modeling and computational complexity~\cite{korda2018linear,ling2020koopman,tian2025physically}. By lifting nonlinear dynamics into a higher-dimensional observable space, the Koopman operator yields an approximately linear system representation, thereby facilitating optimization-based control with improved computational efficiency. {\color{black}This perspective motivates our policy learning strategy in the linear Koopman observable space, enabling efficient and reliable online policy updates.} 

\textcolor{black}{In this article, we propose a computationally efficient learning predictive control (LPC) framework that leverages deep Koopman operators and a receding-horizon reinforcement learning mechanism to enable real-time, safety-aware motion planning for autonomous vehicles.
The main contributions are summarized as follows.}

\begin{itemize}
    \item \textcolor{black}{We develop a unified LPC framework for autonomous vehicle motion planning under unknown dynamics and non-convex environmental constraints. The framework leverages deep Koopman operators to lift the unknown vehicle dynamics into a linear observable space. Within each prediction interval, a finite-horizon actor–critic learning mechanism is formulated in this space, enabling fast and reliable closed-loop state-feedback policy generation, rather than computing open-loop control sequences online as in conventional numerical MPC. This design facilitates successive and reliable online policy improvement with real-time implementation capability.}

    \item \textcolor{black}{Unlike existing safe RL approaches that rely on online safety filters~\cite{kong2025differential,xie2025cbf}, we propose a unified safety-aware policy learning method that directly incorporates the potential function and its gradient as basis functions in both the actor and critic. The associated weights are updated online with low computational overhead, enabling efficient generation of repulsive forces for obstacle avoidance. 
    }

    \item \textcolor{black}{We validate the proposed framework through extensive simulations and real-world experiments on the HongQi-EHS3 platform. The results demonstrate favorable performance in safety, driving comfort, and computational efficiency relative to benchmark methods such as Control Barrier Function-based MPC (CBF-MPC) \cite{zeng2021safety} and Local Model Predictive Contour Control (LMPCC) \cite{brito2019model}, supporting the practical applicability of the proposed method.}
\end{itemize}

The rest of the article is organized as follows: Section II reviews the related literature. Section III presents the deep-Koopman-based vehicle dynamics modeling and the handling of road-boundary and obstacle constraints. Section IV details the proposed LPC framework with deep Koopman vehicle models and actor--critic learning. Section V reports the simulation and real-vehicle results. Section VI concludes the article and discusses future work.

\section{Related Work}
\textbf{MPC-based trajectory planning}: MPC is a widely used model-based approach that optimizes control objectives over a future horizon and has been extensively applied to robot motion planning and control \cite{nubert2020safe, lazcano2021mpc}. In AV motion planning, obstacles and road boundaries are often incorporated into the optimization problem as safety constraints or potential fields \cite{zhang2022receding, rasekhipour2016potential, gutjahr2016lateral, ji2016path}, which typically results in a nonlinear optimization problem to be solved online. Solving this problem yields executable control sequences or safe trajectories while balancing performance and safety. Although advanced solvers such as IPOPT and SNOPT can handle nonlinear programs, they often suffer from high computational cost and occasional numerical failures. Linearization-based methods can simplify the problem into a QP form~\cite{chen2019autonomous}, improving real-time reliability, but at the expense of model fidelity, thereby creating a trade-off between computational efficiency and control performance.

\textbf{Data-driven vehicle modeling with Koopman operators}: The Koopman operator has recently emerged as a promising data-driven modeling tool for control-oriented design~\cite{korda2018linear}. By lifting nonlinear dynamics into a higher-dimensional space, it enables an approximately linear representation that facilitates control design and optimization without relying on simplified or locally linearized models. Although the Koopman operator is inherently infinite-dimensional, finite-dimensional approximations such as Extended Dynamic Mode Decomposition (EDMD) have been developed. However, EDMD usually depends on manually designed kernel functions, which has motivated deep-learning-based approaches for automatically constructing observation functions. For example, adaptive basis function learning with deep neural networks has been proposed to improve modeling flexibility \cite{li2017extended}, while a variational deep Koopman framework has been introduced to learn Gaussian-distributed observables, although its use of LSTMs and GRUs increases computational complexity \cite{morton_deep_2019}. In the AV domain, a data-driven Koopman-EDMD framework was developed for modeling vehicle dynamics on a scaled platform for real-time trajectory tracking~\cite{joglekar2023data}. Moreover, a deep Koopman model combined with neural networks and LSTM has been applied to freeway traffic flow prediction and ramp metering~\cite{gu2023deep}. Recent studies have further integrated attention mechanisms such as Transformers \cite{geneva2022transformers}, and combined Koopman operators with convolutional architectures for image-based dynamics modeling \cite{bai2022characterization, Heijden2021IROS}, further demonstrating the versatility of Koopman-based methods. In addition, bilinear vehicle models built on Koopman representations have shown promising performance in control applications such as deployment on resource-constrained systems~\cite{zhang2020kmpc} and lane keeping with obstacle avoidance~\cite{yu2022kmpc}. These studies highlight the potential of Koopman-based methods for dynamic system modeling and control.

\textbf{Data-driven trajectory planning}: Recent advances in intelligent control have also highlighted the potential of data-driven methods for AVs, particularly those based on neural networks \cite{coelho2022review, kiran2021deep}. Supervised learning approaches employ deep networks to approximate nonlinear MPC policies, improving online efficiency but typically requiring extensive offline data and offering limited adaptability after deployment \cite{lee2022real}. In contrast, vision-based end-to-end (E2E) control, usually driven by convolutional neural networks (CNNs), reduces reliance on explicit modeling but still faces difficulties in integrating high-level decision-making, which hinders practical deployment \cite{abolghasemi2020accept, jhung2018end}. Nevertheless, incorporating imitation learning with expert demonstrations and collision data has improved the performance of E2E methods \cite{lee2020mixgail}. Although such methods are promising for tracking control and may reduce dependence on costly mapping systems, their lack of transparency and safety guarantees still limits their practical use.

RL provides an appealing alternative by enabling online policy learning for complex AV tasks~\cite{kiran2021deep}. Emerging studies involving large language models (LLMs) also suggest possible improvements in interpretability \cite{fu2023drive, Cui2023drive}. Within RL, deep RL (DRL) has been widely investigated for motion planning and control in diverse scenarios \cite{chen2019model, wu2022uncertainty, chen2021interpretable}. As another important line of research, Approximate Dynamic Programming (ADP) extends dynamic programming ideas to complex nonlinear systems \cite{wang2013self}. For AV control, ADP methods often adopt actor--critic structures to approximate optimal policies and value functions. For example, Dual Heuristic Programming (DHP) has been used for trajectory tracking through steering kinematics \cite{zhao2014approximate}, while related studies model lateral control as a Markov Decision Process (MDP) and develop DHP-based controllers accordingly \cite{huang2016lateral, huang2017parameterized}. More recently, Receding-Horizon RL (RHRL) approaches have combined actor--critic learning with the receding-horizon mechanism for more efficient policy generation \cite{xu2018learning,zhang2022robustLPC,dong2018functional,zhang2022receding}. Unlike traditional MPC, these methods generate explicit feedback policies rather than open-loop control sequences. Building on this line of research, this work extends the RHRL framework to real-time AV motion planning by incorporating Koopman-based dynamics modeling and actor--critic learning, thereby enabling fast policy optimization in a higher-dimensional observable space under unknown dynamics.

\section{Vehicle Dynamics Modeling and Safety Constraints Handling}
This section frames the challenge of navigating under environmental safety constraints, such as obstacles and road boundaries, for AV motion planning treated as an online optimization problem. To address this, we employ the deep Koopman approach to model vehicle system dynamics, transforming nonlinear dynamics into a linear observation space for efficient control design. 
\textcolor{black}{Additionally, convex local approximations are used to construct surrogate obstacle representations, from which potential field functions are built for soft safety handling within a unified actor--critic framework, thereby allowing safety information to participate directly in online policy learning.}

\subsection{Vehicle Dynamics Modeling}
Traditional model-based control methods for motion planning often rely on a simplified AV bicycle model, involving vehicle parameters such as front/rear cornering stiffness, total mass, and yaw inertia. In this work, however, the vehicle model is represented in discrete time as
\begin{equation}\label{equDDK:linear_dynamics}
x_{k+1} = f\left(x_k,u_k\right),
\end{equation}
\textcolor{black}{where $f(\cdot)$ denotes an unknown nonlinear vehicle dynamics map. } More specifically, the state at time $k$ is defined as $x_k=[p_{x,k},\,p_{y,k},\,\psi_k,\,v_{x,k},\,v_{y,k},\,\omega_k] \in \mathbb{X}$, where $(p_{x,k},p_{y,k})$ denotes the vehicle position in a local coordinate system, $\psi_k$ is the yaw angle, and $(v_{x,k},v_{y,k},\omega_k)$ denote the longitudinal velocity, lateral velocity, and yaw rate, respectively. The input is given by $u_k=[\zeta_k,\eta_k]^\top \in \mathbb{U}$, where $\zeta_k$ is the steering angle and $\eta_k$ is the longitudinal acceleration command.

\textcolor{black}{Since the state includes both lateral and yaw dynamics, Eq.~(\ref{equDDK:linear_dynamics}) can be interpreted as a compact nonlinear dynamic representation of vehicle motion, whose unknown mapping $f(\cdot)$ implicitly captures the combined effects of lateral--yaw coupling, wheel--road interaction, and other nonlinear dynamic characteristics reflected in the data, rather than a purely geometric kinematic relation. However, the corresponding dynamic parameters are difficult to estimate accurately, which can degrade model fidelity in nonlinear operating regimes. To address this issue, we adopt a deep Koopman framework to learn a lifted surrogate model of the nonlinear dynamics (\ref{equDDK:linear_dynamics}). In this lifted observable space, the Koopman model evolves approximately linearly, enabling efficient forward prediction for online motion planning.}

\subsection{Deep Koopman Modeling Approach}
The Koopman operator provides a powerful framework for modeling nonlinear systems by lifting their dynamics into a higher-dimensional space where they evolve linearly, enabling efficient and interpretable control design. 
However, identifying suitable observation functions for complex AV systems is challenging. To address this, we combine the Koopman operator with deep neural networks, leveraging their ability to learn expressive, data-driven observation functions for accurate vehicle dynamics modeling. As shown in Fig.~\ref{fig:DKframe}, the deep Koopman maps the nonlinear system to a linear observation space $\mathcal{O}$, the encoder $\boldsymbol{\phi}(\cdot, \theta_e)$, parameterized by $\theta_e$ and implemented as a Multi-layer Perceptron (MLP), transforms the nonlinear vehicle state space into $\mathcal{O}$, where dynamics are governed by linear matrices $A$ and $B$. Meanwhile, the decoder $\tilde{\boldsymbol{\phi}}(\boldsymbol{\phi}(x_k))$, an MLP parameterized by $\theta_d$, maps the observation state back to the original state space, enabling state predictions in the nonlinear domain. This framework is then expressed as
\begin{equation}\label{equDDK:DDK}
\left\{
\begin{aligned}
\hat{\boldsymbol{\phi}}(x_{k+1}) &= A\,\boldsymbol{\phi}(x_k)+B u_k,\\
\hat{x}_{k+1} &= \tilde{\boldsymbol{\phi}}\!\left(\hat{\boldsymbol{\phi}}(x_{k+1})\right).
\end{aligned}
\right.
\end{equation}
where $A\in\mathbb{R}^{L\times L}$ and $B\in\mathbb{R}^{L\times m}$, with $L$ denoting the dimension of the observation space. $\hat{\boldsymbol{\phi}}(x_{k+1})$ denotes the predicted observation state at time $k+1$, from which the corresponding predicted vehicle state $\hat{x}_{k+1}$ is obtained via the decoder.

\begin{figure}
\centering
\includegraphics[width=0.96\columnwidth]
{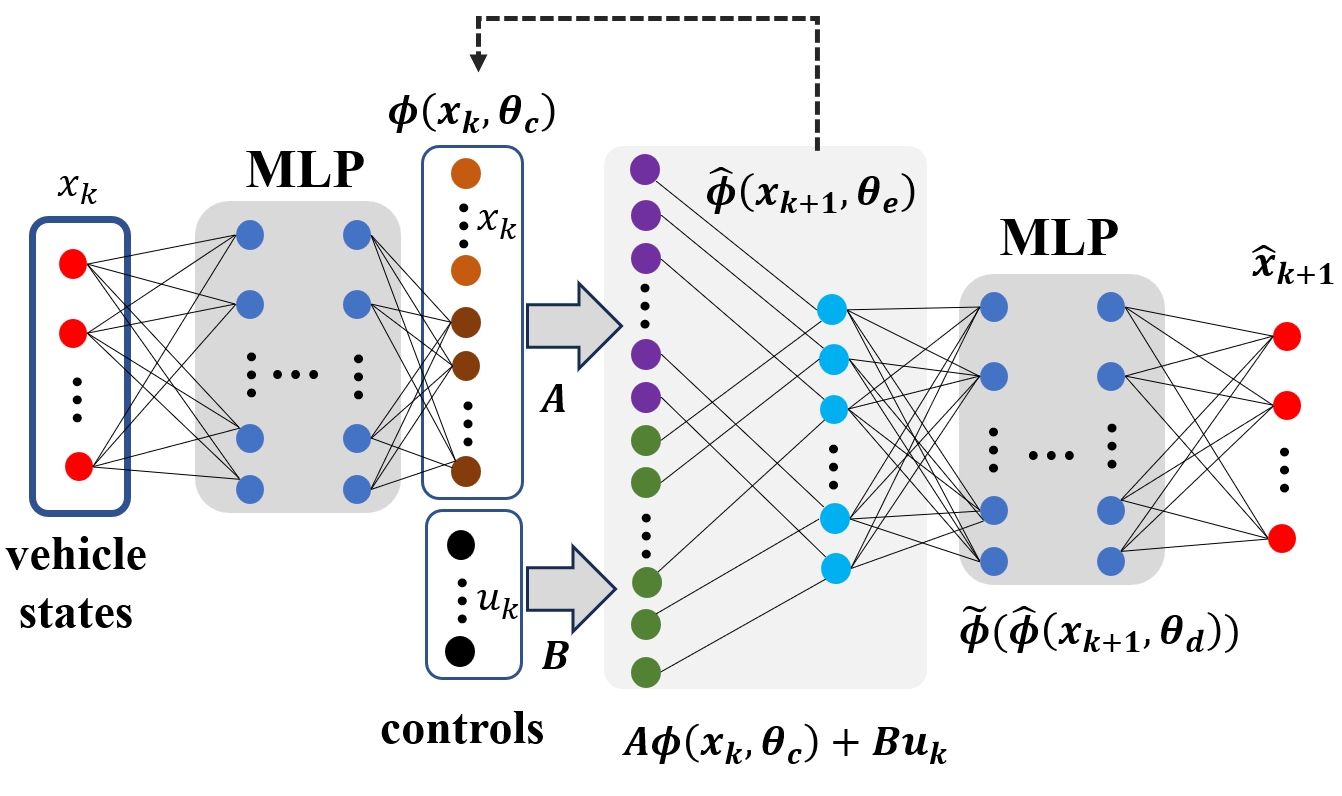}
\caption{Deep Koopman vehicle dynamics modeling framework (MLP: Multi-layer Perception).}
\label{fig:DKframe}
\end{figure}
To approximate vehicle dynamics over an extended time window, minimizing the multi-step prediction error takes precedence over the one-step error. Accordingly, the loss function includes multi-step reconstruction prediction loss and reconstruction loss, as well as multi-step prediction loss within the observation space. Unlike EDMD, which uses one-step prediction approximations solvable analytically, this problem’s complexity precludes an analytical solution due to its multi-step nature. Instead, a data-driven training approach is adopted, in which the inclusion of a multi-step loss function renders least-squares methods impractical, but batch gradient descent provides an effective training alternative. More training details are provided in our previous work \cite{xiao2023deepedmd}.

\subsection{Constraints Handling}
\textcolor{black}{RL excels in adaptive control for AVs but struggles to incorporate hard nonconvex safety constraints directly into policy design and online learning. To obtain a tractable formulation for safety-aware policy learning, we first construct convex local surrogate representations of obstacle boundaries and the associated potential field functions. These are then integrated into the actor–critic architecture to enable safe policy learning. This design yields a unified safety-aware learning framework for motion planning under complex non-convex constraints.}

\textit{(1) Road Boundary Constraints}: The feasible set for road boundary constraints is defined as $\Omega_{k}^b=\{p_k|p_{\min}\le p_k-p_{ref,k}\le p_{\max}\}$, where $p_k=\text{I}_{2\times n}\cdot x_k$ and $p_{ref,k}=\text{I}_{2\times n}\cdot x_{ref,k}$ represent the coordinate vectors of the autonomous vehicle and its nearest reference point, respectively. The terms $p_{\min},p_{\max}\in \mathbb{R}^2$ denote the time-varying road boundaries. To enforce lane constraints on the vehicle, a logarithmic potential field function can be introduced:
\begin{equation}\label{equRHDHP:bound_barrier_function}
	\mathcal{B}_b(d_b) = -\ln \left( \frac{\gamma _1\left( d_b-d_{b,\min} \right)}{\gamma _1\left( d_b-d_{b,\min} \right) +1} \right) -\ln \left( \frac{\gamma _2\left( d_{b,\max}-d_b \right)}{\gamma _2\left( d_{b,\max}-d_b \right) +1} \right),
\end{equation}
where $d_b$ is the normal-directional distance to the road boundary, $d_{b,\min}$ and $d_{b,\max}$ are the minimum and maximum distances, and $\gamma_1, \gamma_2 > 0$ control the function’s suppression effect near boundaries. The function, inactive at safe distances to save computational resources, balances safety and optimality based on $\gamma_1$ and $\gamma_2$.

\textit{(2) Obstacle Avoidance Constraints}: The obstacle avoidance constraint set $\{ p_k \mid p_k \notin \Omega_k^o \}$ is potentially nonconvex, where $\Omega_k^o$ is the obstacle region and $p_k = \bar{C} \boldsymbol{\phi}(x_k)$ is the vehicle’s position in the observation space. We apply the Convex Feasible Set (CFS) method~\cite{liu2018convex} to approximate this set with a convex representation, defining the distance from the obstacle as $d(p, \Omega_k^o) := \min_{z \in \Omega_k^o} d(p, z)$, where $d(p, z)$ is the Euclidean distance. The vehicle is approximated by $j$ circular envelopes, yielding the constraint:
\begin{equation}
	\bar{C}\boldsymbol{\phi }\left( x_k \right) \in \mathbb{F} _k=\cap _{i=1}^{j}\mathbb{F} _{k}^{i},
\end{equation}
where $\mathbb{F} _{k}^{i}=\left\{ \tilde{p}_{k,i}~|~d( \tilde{p}_{k,i},\Omega _{k}^{o} ) \ge d_{o,\min} \right\}$, and $\tilde{p}_{k,i}$ is a point outside the $i$-th envelope, and $d_{o,\min}$ is the safety distance threshold. The standard CFS method produces time-varying polyhedral constraints that are prone to instability due to sensitivity to initial conditions, often requiring additional guidance for effective obstacle avoidance. To address these limitations, an enhanced CFS approach is proposed, incorporating parameters $l_f$ and $l_b$ to reshape polyhedral edges for better safety and comfort (Fig.~\ref{figSRCKNet:constraint_bend_road}). The parameters $h_s$ and $w_s$ define the expanded dimensions of the obstacle, considering the positioning and control inaccuracies. 
A convex polygon, formed by points $P_b$ and $P_f$ (rear and front corners), extends the obstacle’s bounding box by $l_b$ backward and $l_f$ forward along its longitudinal axis from the lane centerline, guided by the minimal safe following gap. It encloses the obstacle’s circular envelopes, ensuring a convex representation.
The resulting linear safety constraint is:
\begin{equation}\label{equRHDHP:obs_linear_cons}
	G_i(\boldsymbol{\phi}(x_k)) = \boldsymbol{L}_{i,k} \boldsymbol{\phi}(x_k)+d_{i,k} - g_{i,k},
\end{equation}
where $\boldsymbol{L}_{i,k} = [b_{i,k}, c_{i,k}, \boldsymbol{0}_{1\times L-2}]$, $b_{i,k}$, $c_{i,k}$ are the coefficients of the normal vector to the edge, $d_{i,k}$ is the offset determined by the position of the edge relative to the origin in the coordinate system, while $g_{i,k} \leq 0$ is the safety margin shrinking the obstacle avoidance constraint for safety under dynamics modeling errors. Then the obstacle potential field function as a soft constraint is defined by
\begin{equation}\label{equRHDHP:obs_barrier_function}
	\mathcal{B} _o\left( \boldsymbol{\phi} \left( x_k \right) \right) =\sum_{i=1}^{m_o}{\frac{1}{ G_i\left( \boldsymbol{\phi} \left( x_k \right) \right) ^2+\epsilon _{\boldsymbol{\phi },i}}},
\end{equation}
where $m_o$ is the number of edges of the convex polygon representing the obstacle. 
\textcolor{black}{The above potential-field terms act as soft safety penalties, while their associated gradients serve as repulsive forces within the actor–critic structure for online safe learning. In implementation, the regularization term $\varepsilon_{\phi,i}>0$ is introduced to avoid singularity and improve numerical stability; smaller values lead to stronger repulsive effects near unsafe regions and therefore enlarge empirical safety margins.}

\begin{figure}
\centering
\includegraphics[width=0.8\columnwidth]{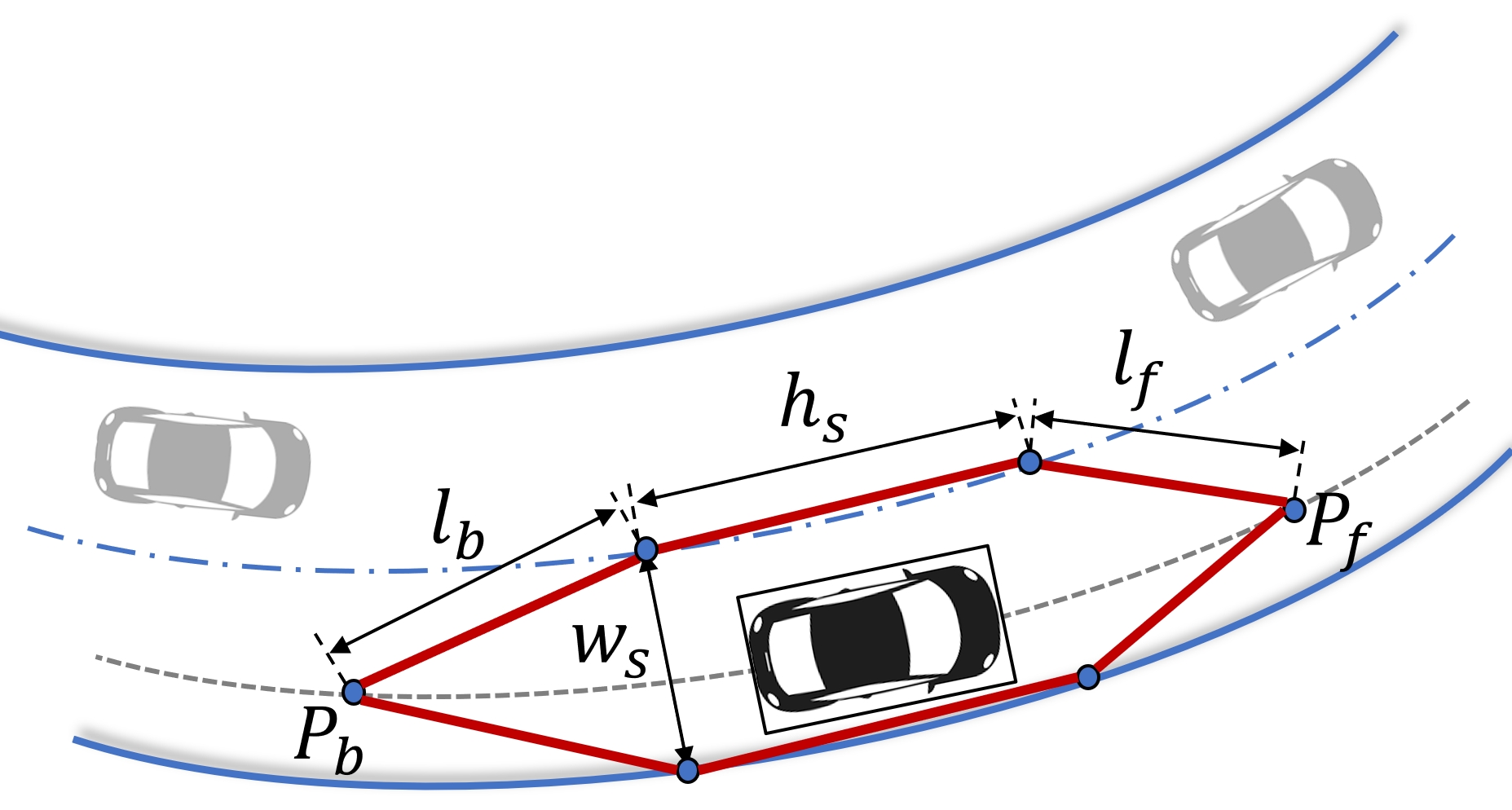}
\caption{Illustration of convex representation for obstacle constraints.}
\label{figSRCKNet:constraint_bend_road}
\end{figure}

\section{LPC framework with Deep Koopman Operators}
This section introduces the LPC framework with a deep Koopman vehicle dynamics model and actor-critic learning. Unlike traditional MPC, which computes an open-loop control sequence, the proposed framework leverages an actor-critic learning strategy to derive a closed-loop control policy within the prediction horizon. Utilizing the deep Koopman vehicle dynamics model as its prediction model, it produces a control policy that seamlessly integrates planning and control for direct vehicle application. Using the deep Koopman model for predictions, it reformulates the value function with obstacle functions, derives optimal control via the Hamilton-Jacobi-Bellman (HJB) equation, and implements online learning of the control policy and value function in the actor-critic framework. 
\subsection{Value Function Reconstruction and Optimal Control Strategy}
Since the RL approach relies on trial-and-error exploration and cannot directly enforce hard constraints, we address this by integrating obstacle functions, representing safety constraints such as road boundaries and obstacle avoidance, into the performance index. This transforms the original constrained optimization problem into an unconstrained one, leveraging soft constraints for simpler and safer resolution. Let $\boldsymbol{\phi}_{e}=\boldsymbol{\hat{\phi}}-\boldsymbol{\phi}_{\mathrm{ref}}$,  where $\boldsymbol{\phi}_{\text{ref}}(x_{k})$ is the reference trajectory’s observed state. The reward function is defined as
\begin{equation}
	\begin{aligned}
		r(\boldsymbol{\phi}_e(x_{k}), u_{k}) = &\left\|\boldsymbol{\phi}_e(x_{k})\right\|_{Q}^{2}+\|u_{k}\|_{R}^{2} \\
		& + \gamma_{b} \mathcal{B}_b(d_{b,k}) + \gamma_{o} \mathcal{B}_o(\boldsymbol{\phi}(x_{k})),
	\end{aligned}	
\end{equation}
where $Q \in \mathbb{R}^{L \times L}$ and $R \in \mathbb{R}^{m \times m}$ are positive definite matrices penalizing deviations in observed states and control inputs, respectively, $\gamma_b > 0$ and $\gamma_o > 0$ are penalty coefficients weighting the soft constraints. 

Similarly, the terminal cost $V_f(\boldsymbol{\phi}_e(x_{k+N_p}))$ is reformulated as
\begin{equation}
	\begin{aligned}
		V_f(\boldsymbol{\phi}_e(x_{k+N_p})) =& \left\|\boldsymbol{\phi}_e(x_{k+N_p \mid k})\right\|_{P}^{2}+\gamma_{b}\mathcal{B}_b(d_{b,k+N_p\mid k}) \\
		& 
		+ \gamma_{o} \mathcal{B}_o(\boldsymbol{\hat{\phi}}(x_{k+N_p\mid k})),
	\end{aligned}
\end{equation}
where $P\in\mathbb{R}^{L\times L}$ is a positive definite matrix. 

After the value function reconstruction process, building on the deep Koopman vehicle dynamics model and safe constraint handling, we formulate the AV motion planning problem as a finite-horizon optimization at each discrete time step $k$:
\begin{equation}\label{equRHDHP:planning_optimization1}
	\begin{aligned}
		\min_{u_{k:k+N_p-1}} V_k &= \sum_{i=0}^{N_p-1} r(\boldsymbol{\phi}_e(x_{k+i\mid k}),u_{k+i\mid k})+ V_f(\boldsymbol{\phi}_e(x_{k+N_p}))\\
		\text{s.t.} \,\,\, &\boldsymbol{\hat{\phi}}(x_{k+1}) = A\boldsymbol{\phi}(x_k) + B u_k
	\end{aligned}
\end{equation}
where $N_p$ denotes the prediction horizon length. Based on the Bellman optimality principle, the HJB equation in the prediction horizon $\tau\in[k,k+N_p-1]$ can be written as
\begin{equation}\label{equRHDHP:optimal_value}
	V_{\tau}^{*}=\min_{u_{\tau}\in \mathbb{U}} \,\,(r\left( \boldsymbol{\phi }_{e,\tau} ,u_{\tau} \right) +V_{\tau+1}^{*}),
\end{equation}
where $\tau \in [k,k+N_p-1]$, $\boldsymbol{\phi}_{e,\tau}=\boldsymbol{\phi}_e(x_{\tau})$ denotes the observed states, and $V_{\tau}^*$ denotes the optimal value function within the prediction horizon. 

Thus, the optimal control $u_{\tau}^{*}$ is given by
\begin{equation}\label{equRHDHP:optimal_control}
	u_{\tau}^{*}=\mathop {\mathrm{arg}\min} \limits_{u_{\tau}\in \mathbb{U}}\,\,(r\left( \boldsymbol{\phi }_{e,\tau} ,u_{\tau} \right) +V_{\tau+1}^{*}).
\end{equation}


Moreover, the costate value function is defined as  $\lambda_{\tau} =\frac{\partial V_{\tau}}{\partial \boldsymbol{\phi }_{e,\tau}}$, 
and noticing $\lambda^*_{\tau+1}=\frac{\partial V^*_{\tau+1}}{\partial \boldsymbol{\phi}_{e,\tau+1}}$, the optimal costate value function $\lambda^*$ can be expressed as
%
\begin{equation}\label{equRHDHP:optimal_lambda}
	\lambda _{\tau}^{*}=\frac{\partial V_{\tau}^{*}}{\partial \boldsymbol{\phi}_{e,\tau}}=2Q\boldsymbol{\phi }_{e,\tau}+A^{\top}\lambda_{\tau+1}^{*}+\gamma _{b}\bm{\beta}_{b,\tau}+\gamma _{o}\bm{\beta}_{o,\tau},
\end{equation}
and
\[\bm{\beta}_{b,\tau} =\frac{\partial \mathcal{B} _b\left( d_{b,\tau} \right)}{\partial d_{b,\tau}}\frac{\partial d_{b,\tau}}{\partial \boldsymbol{\phi }_{e,\tau}}, \quad \bm{\beta}_{o,\tau} =\frac{\partial \mathcal{B}_o\left( \boldsymbol{\hat{\phi}}_{\tau} \right)}{\partial \boldsymbol{\phi }_{e,\tau}}, \]
and 
$A=\frac{\partial \boldsymbol{\phi }_{e,\tau+1}}{\partial \boldsymbol{\phi}_{e,\tau}}$ 
could be obtained from \eqref{equDDK:DDK}, $d_{b,{\tau}}$ represents the Euclidean distance between the autonomous vehicle and the nearest point on the road boundary.

Then, the optimal terminal costate value function is determined by
\begin{equation}
	\lambda_{k+N_p}^{*} = 2P\boldsymbol{\phi}_{e,k+N_p} + \gamma_{b} \bm{\beta}_{b,k+N_p} + \gamma_{o} \bm{\beta}_{o,k+N_p},
\end{equation}
where 
\[\bm{\beta}_{b,k+N_p} = \frac{\partial \mathcal{B}_b(d_{b,k+N_p})}{\partial d_{b,k+N_p}} \frac{\partial d_{b,k+N_p}}{\partial \boldsymbol{\phi}_{e,k+N_p}}, \quad \bm{\beta}_{o,k+N_p} = \frac{\partial \mathcal{B}_o(\boldsymbol{\hat{\phi}}_{k+N_p})}{\partial \boldsymbol{\phi}_{e,k+N_p}}. \] 

\subsection{Safe Policy Learning in the Prediction Interval}
Traditional infinite-horizon ADP methods for online control policy learning lack efficiency. To address this, we propose the LPC framework with a deep Koopman vehicle dynamics model. This approach converts infinite-horizon optimization into finite-horizon problems, enabling iterative policy learning within the prediction horizon $[k, k+N_p-1]$ to ensure convergence. The deep Koopman model’s linear dynamics in the observation space provide high forward inference efficiency, thereby enhancing real-time performance. Furthermore, road boundary and obstacle constraint handling ensure safe and effective avoidance maneuvers. As shown in Fig.~\ref{fig:SLPC}, the proposed framework uses onboard sensor data for the vehicle’s state $x_k$ and obstacle positions. Within the prediction horizon, the deep Koopman model predicts the vehicle’s trajectory and obstacle interactions, while an actor-critic structure learns the control policy and value function. The resulting control sequence $u_{k:k+N_p-1}$ is computed, but only the first element is applied at time $k$. Meanwhile, the optimization repeats at $k+1$ to update the solution.
\begin{figure}[htb]
	\begin{center}
		\includegraphics[width=0.96\columnwidth]{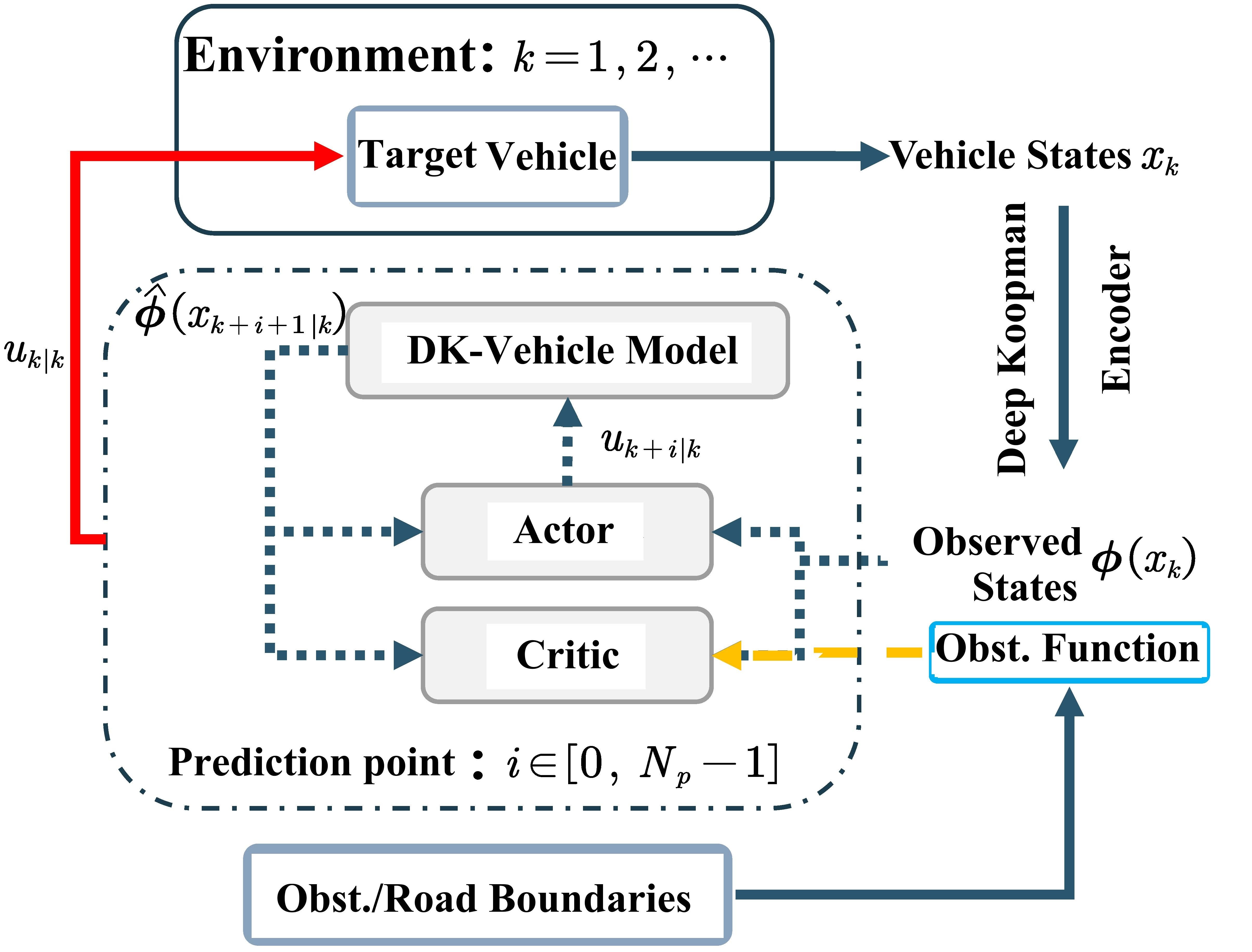}
		\caption{LPC with deep Koopman models and actor-critic learning (Obst.: Obstacle, DK: deep Koopman).}
		\label{fig:SLPC}
	\end{center}
\end{figure}

The LPC framework based on actor-critic structure is used to learn and approximate the optimal value function \eqref{equRHDHP:optimal_value} and the optimal control policy \eqref{equRHDHP:optimal_control}.  Let $V_{\tau}^{(0)}=0$, the proposed method iteratively updates the control policy and function values through policy evaluation and policy optimization within the prediction horizon, respectively.

(1) Policy evaluation:
\begin{subequations}\label{equRHDHP:value_iteration}
    \begin{align}
	V_{\tau}^{(i+1)} &= r\left( \boldsymbol{\phi}_{e,\tau} ,u_{\tau} \right) +V_{\tau+1}^{(i)},\\
	V_{k+N_p}^{(i+1)} &= V_f(\boldsymbol{\phi}_{e,k+N_p}).
    \end{align}	

(2) Policy optimization:
    \begin{align}\label{equRHDHP:policy_improvement}
     u_{\tau}^{(i)} = \mathop {\mathrm{arg}\min} \limits_{u_{\tau}\in \mathbb{U}}\,\,(r\left(\boldsymbol{\phi}_{e,\tau},u_{\tau} \right) +V_{\tau+1}^{(i)}), 
    \end{align}
\end{subequations}
where $\tau \in [k,k+N_p-1]$, and the upper script ${}^{(i)}$ denotes the iteration number.

Next, we present the receding horizon value iteration algorithm based on the costate value function. Let $\lambda_{\tau}^{(0)}=0$, and the policy evaluation and policy optimization iterations within the prediction horizon are given by
\begin{equation}
\begin{aligned}\label{equRHDHP:value_iteration1}
&\lambda_{\tau}^{(i+1)}=2Q\boldsymbol{\phi}_{e,\tau}+\Gamma A^{\top}\lambda_{\tau+1}^{(i)}+ \gamma_{b} \bm{\beta}_{b,\tau} +\gamma_{o} \bm{\beta}_{o,\tau}, \\
&\lambda_{k+N_p}^{(i+1)} = 2P\boldsymbol{\phi}_{e,k+N_p} + \gamma_{b} \bm{\beta}_{b,k+N_p} +\gamma_{o} \bm{\beta}_{o,k+N_p},
\end{aligned}
\end{equation}
\begin{equation}\label{equRHDHP:policy_improvement1}
	u_{\tau}^{(i)}(j)=\begin{cases}
		\tilde{u}_{\tau}^{(i)}(j)&		u_{\min}(j)\le \tilde{u}_{\tau}^{(i)}(j)\le u_{\max}(j),\\
		u_{\min}(j)&		\tilde{u}_{\tau}^{(i)}(j)<u_{\min}(j),\\
		u_{\max}(j)&		\tilde{u}_{\tau}^{(i)}(j)>u_{\max}(j),\\
	\end{cases}
\end{equation}
Here, $\Gamma \in (0,1]$ is the actor--critic update discount factor, which weights the propagation of future co-state information within the prediction horizon, and 
\[ \tilde{u}_{\tau}^{(i)}=-\frac{1}{2} R^{-1} \Gamma B^{\top}\lambda _{\tau +1}^{(i)}, \quad \tau\in [k, k+N_p-1]\] 
and $\tilde{u}_{\tau}^{(i)}(j)$ represents the $j$-th element of $\tilde{u}_{\tau}^{(i)}$. Similarly, $u_{\min}(j)$, $u_{\max}(j)$, $u_{\tau}^{(i)}(j)$ represent the $j$-th elements of $u_{\min}$, $u_{\max}$, $u_{\tau}^{(i)}$, respectively.

The convergence analysis of the finite-horizon policy-learning procedure is deferred to Appendix \ref{app:inner_loop_convergence}.

\subsection{Online Safe Actor-critic Implementation}
This section uses actor-critic learning to implement the proposed approach, where both the actor and critic are realized using neural networks. The actor network is responsible for learning and approximating the optimal control policy, while the critic network evaluates the learned control policy. 
To achieve safe vehicle motion control in complex environments, the co-state value function includes obstacle function terms for road boundary and obstacle avoidance. Therefore, the output layer structure of the critic network is designed as
\begin{equation}\label{equRHDHP:critic_output1}
    \hat{\lambda}_{\tau} = W_{c,\tau}^{\top} h_c(\boldsymbol{\phi}_{e,\tau}) + b_{c,\tau} + \gamma _{b}\bm{\beta}_{b,\tau}+\gamma _{o}\bm{\beta}_{o,\tau},
\end{equation}
where $W_{c,\tau}$ and $b_{c,\tau}$ represent the weight matrix and bias vector of the output layer, respectively. $h_{c}(\boldsymbol{\phi}_{e,\tau})$ represents the output of the last hidden layer. When the autonomous vehicle approaches the road and obstacle constraint boundaries, the last two terms in the equation will increase sharply, causing the vehicle to move away from the road boundary and the obstacle boundary. From equation \eqref{equRHDHP:value_iteration1}, the target co-state value function of the critic network within the prediction horizon $[k, k+N_p]$ is calculated as follows
\begin{equation}\label{equRHDHP:target_value}
\begin{aligned}
    &\bar{\lambda} _{\tau}=2Q\boldsymbol{\phi }_{e,\tau}+\Gamma A^{\top}\hat{\lambda}_{\tau+1}+\gamma _{b}\bm{\beta}_{b,\tau}+\gamma _{o}\bm{\beta}_{o,\tau}, \\
        &\bar{\lambda}_{k+N_p} =2P\boldsymbol{\phi}_{e,k+N_p} + \gamma_{b} \bm{\beta}_{b,k+N_p} + \gamma_{o} \bm{\beta}_{o,k+N_p},
\end{aligned}
\end{equation}
where $\tau \in [k, k+N_p-1]$, and $\hat{\lambda}_{\tau+1}=\hat{\lambda}(\hat{\boldsymbol{\phi}}_{e,\tau+1})$. 

During the algorithm training process, the critic network minimizes the Temporal Difference (TD) error, so the corresponding optimization objective function $\delta _{c,\tau}$ can be designed as follows
\begin{equation}\label{equRHDHP:critic_loss}
	\delta _{c,\tau}=\frac{1}{2}\left\| \hat{\lambda}_{\tau}-\bar{\lambda}_{\tau} \right\| _{2}^{2},
\end{equation}
where $\hat{\lambda}_{\tau}$ is obtained from the critic network, and $\bar{\lambda}_{\tau}$ is calculated using equation \eqref{equRHDHP:target_value}. The update rule for the critic network weights is given by
\begin{equation}\label{equRHDHP:update_critic}
	\theta_{c,{\tau}+1} = \theta_{c,{\tau}} - \alpha_{c,{\tau}} \delta_{c,\tau} \frac{\partial \hat{\lambda}(\boldsymbol{\phi}_{e,\tau})}{\partial \theta_{c,{\tau}}},
\end{equation}
where $\alpha_{c,{\tau}} \in (0, 1]$ is the learning rate for the critic; $\theta_c$ represents the weights of the critic network, which include the weight matrices and bias vectors for each layer of the neural network.

The actor network is implemented using a neural network with the activation function of the final layer being the hyperbolic tangent function, namely,
\begin{equation}\label{equRHDHP:actor_output1}
    \hat{u}_{\tau} = \tanh \left( W_{a,\tau}^{\top} h_{a,\tau} \left( \boldsymbol{\phi}_{e,\tau} \right) + b_{a,\tau} + K_{b,\tau}\bm{\beta}_{b,\tau} + K_{o,\tau} \bm{\beta}_{o,\tau} \right),
\end{equation}
where $W_{a,\tau}$, $b_{a,\tau}$, $K_{o,\tau} \in \mathbb{R}^{m \times L}$, and $K_{b,\tau} \in \mathbb{R}^{m \times L}$ are the weights for the output layer of the actor-network; $h_{a,\tau}(\boldsymbol{\phi }_{e,\tau})$ represents the output of the last hidden layer of the actor-network. 

From Eq. \eqref{equRHDHP:policy_improvement1}, the target policy for the actor-network is given by
\begin{equation}\label{equRHDHP:target_control}
	\bar{u}_{\tau}(j)=\begin{cases}
		\hat{\tilde{u}}_{\tau}(j)&		u_{\min}(j)\le \hat{\tilde{u}}_{\tau}\le u_{\max}(j),\\
		u_{\min}(j)&		\hat{\tilde{u}}_{\tau}(j)<u_{\min}(j),\\
		u_{\max}(j)&		\hat{\tilde{u}}_{\tau}(j)>u_{\max}(j),\\
	\end{cases}
\end{equation}
where $\hat{\tilde{u}}_{\tau} = -\frac{1}{2} R^{-1} \Gamma B^{\top}\hat{\lambda} _{\tau +1}$; $u(j)$ represents the $j$-th element of $u$. 

The optimization objective function for the actor-network is given by
\begin{equation}\label{equRHDHP:actor_loss}
	\delta_{a,\tau} = \frac{1}{2}\left\|\hat{u}_{\tau} - \bar{u}_{\tau}\right\|_2^2,
\end{equation}
where $\hat{u}_{\tau}$ is obtained from equation \eqref{equRHDHP:actor_output1}. The weight update rule for the actor-network is
\begin{equation}\label{equRHDHP:update_actor}
	\theta_{a,\tau+1} = \theta_{a,\tau} - \alpha_{a,\tau} \delta_{a,\tau} \frac{\partial \hat{u}_{\tau}}{\partial \theta_{a,\tau}},
\end{equation}
where $\alpha_{a,\tau}$ is the learning rate for the actor network.

\begin{algorithm}[ht]
	\caption{LPC with deep Koopman Operators}
	\label{algRHDHP:DDK_RHRLSC}
	\begin{algorithmic}[1]
	    \REQUIRE deep Koopman model Eq.(\ref{equDDK:DDK}), hyper-parameters ($Q, R, \Gamma, N_p, \alpha_a, \alpha_c, i_{\max}$), reference trajectory $\mathcal{T}_{\text{ref}}$, safety distances ($d_s, d_p$). Initialize critic weights $\theta_c$ s.t. $\lambda_\tau^0 = 0$, actor weights $\theta_a$. Set $k = 1$; $\text{Flag}_{\text{road}} = 1$ if vehicle is within road safety range, else $0$;
        \WHILE {$\text{Flag}_{\text{road}}==1$ \textbf{and} $d_{o,k}>d_s$}
		\STATE Obtain the vehicle's current state $x_k$ and all obstacle states via onboard sensors; 
		\STATE Convert the vehicle, obstacle states, reference trajectory, and road boundary to local coordinates; 
		\STATE Obtain the current $\boldsymbol{\phi}_{\tau}$ from the deep Koopman vehicle dynamics model \eqref{equDDK:DDK} 
		\STATE $i = 0$; 
		\WHILE {$i < i_{\max}$}
		\FOR{$\tau \in [k, k+N_p-1]$}
		\STATE $\beta_{o,\tau} = \boldsymbol{0}_{L\times 1}$, $\beta_{b,\tau} = \boldsymbol{0}_{L\times 1}$; \hfill $O(L)$
		\IF{$d_{o,k}<d_p$}
		\STATE Calculate $\beta_{o,\tau}$ based on Eq. \eqref{equRHDHP:obs_barrier_function}; \hfill $O(m_o L)$
		\ENDIF
		\IF {$d_{b,k}<d_s$}
		\STATE Compute $\beta_{b,\tau}$ based on Eq. \eqref{equRHDHP:bound_barrier_function}; \hfill $O(L)$
		\ENDIF
		\STATE Obtain $\hat{u}_{\tau}$ from Eq. \eqref{equRHDHP:actor_output1}; \hfill $O(N_{nn})$
		\STATE Input $\hat{u}_{\tau}$ into the deep Koopman model \eqref{equDDK:DDK} to predict the vehicle's next state $\hat{\boldsymbol{\phi}}_{\tau+1}$; \hfill $O(L^2 + Lm)$
		\STATE Calculate $\hat{\lambda}_{\tau+1}$ using Eq. \eqref{equRHDHP:critic_output1}; \hfill $O(N_{nn})$
		\STATE Compute the target co-state value function $\bar{\lambda}_{\tau}$ using Eq. \eqref{equRHDHP:target_value}; \hfill $O(L^2 + m_o L)$
		\STATE Calculate the co-state value function error via Eq. \eqref{equRHDHP:critic_loss} and update critic network weights to $\theta_{c,\tau+1}$ using Eq. \eqref{equRHDHP:update_critic}; \hfill $O(N_{nn})$
		\STATE Compute $\bar{u}_{\tau}$ using Eq. \eqref{equRHDHP:target_control}; \hfill $O(L m)$
		\STATE Compute the actor loss via Eq. \eqref{equRHDHP:actor_loss} and update actor weights to $\theta_{a,\tau+1}$ using Eq. \eqref{equRHDHP:update_actor}; \hfill $O(N_{nn})$
		\ENDFOR
		\STATE $i = i + 1$; 
		\ENDWHILE
		\STATE Apply the first control input from the predicted time horizon to the vehicle system; 
		\STATE Set $k = k + 1$; 
		\ENDWHILE
	\end{algorithmic}
    \noindent\textit{Complexity notation:} $N_{nn}$ denotes the total per-step neural-network operation count associated with the actor--critic implementation; for fixed network architectures, it is proportional to the overall parameter scale of the actor and critic networks.
\end{algorithm}

\begin{remark}
The pseudocode of the proposed method is summarized in Algorithm~\ref{algRHDHP:DDK_RHRLSC}. The per-cycle computational cost is dominated by the inner loop, which performs $i_{\max}$ iterations over a prediction horizon of length $N_p$. For each step in the prediction horizon, the main computations include obstacle-function gradient evaluation, with complexity $O(m_oL)$, lifted-state prediction and costate update based on matrix operations, with complexity $O(L^2)$, and actor--critic network evaluations and updates, whose aggregated cost is denoted by $O(N_{nn})$. Therefore, the overall computational complexity per control cycle is $O\!\left(i_{\max}N_p\left(L^2+m_oL+N_{nn}\right)\right)$. This complexity is polynomial in the main design dimensions, indicating favorable computational scalability and supporting practical implementation. 
\end{remark}

\begin{remark}
{\color{black}The convergence result within each prediction interval, together with the safety property of the predicted trajectory induced by the designed potential-function mechanism, is established but omitted here for brevity. Interested readers are referred to Appendix A for detailed theoretical results.}
\end{remark}

\section{Simulations and Real-world Experiments}

This section evaluates the proposed LPC with Deep Koopman Operators (referred to as “Ours”) through simulations and real-world experiments, comparing its performance against two benchmarks of CBF-MPC and LMPCC. Simulations covered four diverse scenarios involving static and dynamic obstacle avoidance on straight and curved roads, while real-world tests were conducted on the HongQi-EHS3 platform. All simulations are performed on a Windows 11 platform with an AMD Ryzen 9 5950X @ 3.2 GHz and 32GB RAM, with a sampling interval of $t_s = 0.02$ s and prediction horizon $N_p = 30$. 
The parameters of the road boundary and obstacle potential functions were $\gamma_1 =\gamma_2 =1.5,~d_{b,\text{max}}=4,~d_{b,\text{min}}=-4$, $\gamma_o = 9.5$, $\epsilon_\phi = 0.04$, $g = 0.3$; the weight matrices are set as $Q = \text{diag}\{20, 300, 300, 300, 20, 20, 100 \cdot \mathbf{1}_{1 \times (L-n)}\}$, and $R = \text{diag}\{5000, 5000\}$; the learning rates of actor and critic were 0.1 and 0.2, respectively, which were selected based on empirical results from simulation trials. The actor--critic update discount factor is set to $\Gamma=1$ unless otherwise stated.

Besides, \textcolor{black}{during the real-world experiments with the HongQi E-HS3 platform, the vehicle state used by the proposed framework was obtained from fused onboard sensing/localization outputs, including position, heading, longitudinal velocity, lateral velocity, and yaw rate. Since the HongQi E-HS3 is an electric vehicle, the longitudinal control input in the proposed framework was defined as commanded longitudinal acceleration rather than raw pedal percentage. In the low-level implementation, this commanded longitudinal acceleration was tracked by a PID-based longitudinal controller, while the steering command is sent to the steering actuation layer accordingly.}



\subsection{Open-Loop Multi-Step Prediction Verification of the Deep Koopman Model}

\textcolor{black}{We first verify the open-loop multi-step prediction performance of the learned deep Koopman model using both high-fidelity simulation data and real-vehicle data. The high-fidelity CarSim dataset contains 38 trajectories (approximately $8.6\times10^5$ samples) with a sampling interval of 10~ms, where the reference speed was scheduled according to road curvature and reaches up to 60~km/h in low-curvature segments. The real-vehicle dataset was collected from the HongQi E-HS3 platform using fused GNSS/INS-based localization and contains approximately $8.1\times10^5$ samples.}

\textcolor{black}{To quantitatively evaluate the open-loop multi-step prediction performance in the observable space, the mean absolute error (MAE) of the $i$-th observable dimension is defined as}
\begin{equation}
\label{eq:koopman_obs_mae}
E_i=\frac{1}{M_NN_p}\sum_{j=1}^{M_N}\sum_{k=1}^{N_p}
\left|\phi_i\!\left(x_k^{[j]}\right)-\hat{\phi}_i\!\left(x_k^{[j]}\right)\right|,
\end{equation}
\textcolor{black}{where $E_i$ denotes the MAE of the $i$-th observable dimension, $M_N$ is the number of test trajectories, $N_p$ is the prediction horizon, $x_k^{[j]}$ denotes the $k$-th state of the $j$-th test trajectory, and $\phi_i(\cdot)$ and $\hat{\phi}_i(\cdot)$ denote the $i$-th elements of the true and predicted observable states, respectively. In addition, the overall root-mean-square error (RMSE) of the observable-space multi-step prediction is defined as}
\begin{equation}
\label{eq:koopman_obs_rmse}
E_o=
\sqrt{
\frac{1}{M_NN_pL}
\sum_{j=1}^{M_N}\sum_{k=1}^{N_p}\sum_{i=1}^{L}
\left(\phi_i\!\left(x_k^{[j]}\right)-\hat{\phi}_i\!\left(x_k^{[j]}\right)\right)^2
},
\end{equation}
\textcolor{black}{where $L$ is the observable-space dimension.}

\textcolor{black}{Fig.~\ref{fig:koopman_openloop_verify}(a) presents a representative heatmap of observable-space multi-step prediction errors on the high-fidelity CarSim dataset. As the prediction horizon increases, the prediction error gradually accumulates, which is expected for open-loop forecasting. Nevertheless, the error remains concentrated within a relatively limited range over most observable dimensions, indicating that the learned deep Koopman model preserves satisfactory long-horizon prediction consistency.}

\textcolor{black}{Fig.~\ref{fig:koopman_openloop_verify}(b) shows the multi-step prediction errors in the original vehicle-state space on the HongQi E-HS3 real-vehicle dataset under different numbers of training trajectories. As the number of training trajectories increases, the prediction errors decrease consistently, indicating that richer training-data coverage improves the predictive accuracy and generalization capability of the learned model. Together, these results support the use of the learned deep Koopman model as the predictive model in the subsequent control framework.
}

\begin{figure}[t]
    \centering
    \subfloat[Observable-space multi-step prediction heatmap on the high-fidelity CarSim dataset.]{
        \includegraphics[width=0.95\columnwidth]{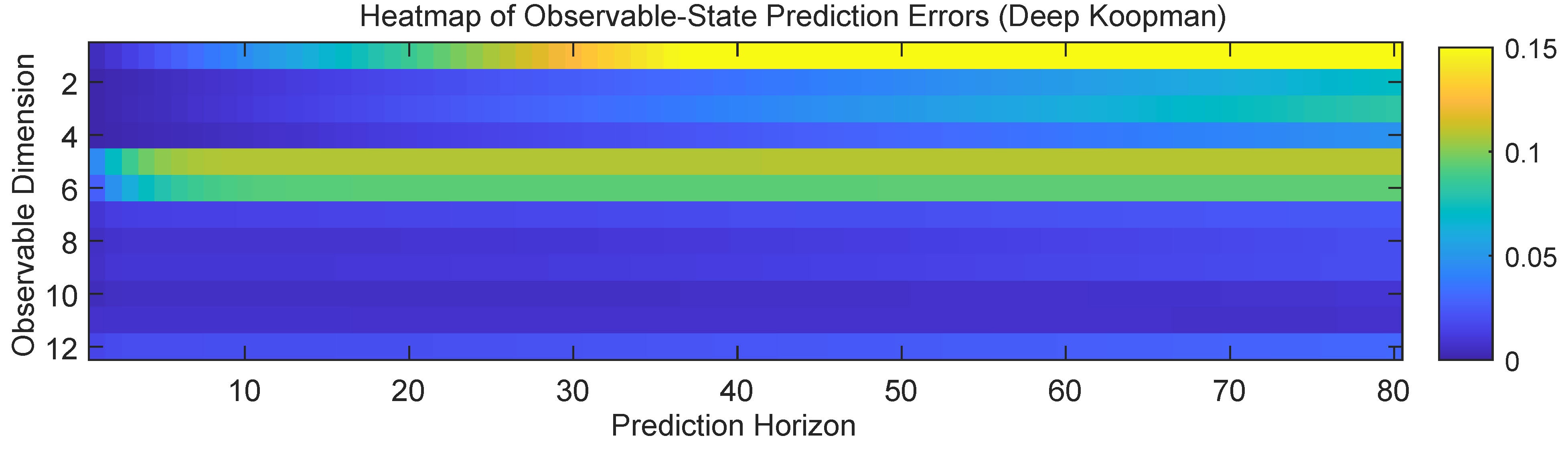}
        \label{fig:koopman_openloop_verify_a}
    }\hfill
    \subfloat[Multi-step prediction errors in the original vehicle-state space on the HongQi E-HS3 real-vehicle dataset under different numbers of training trajectories.]{
        \includegraphics[width=0.9\columnwidth]{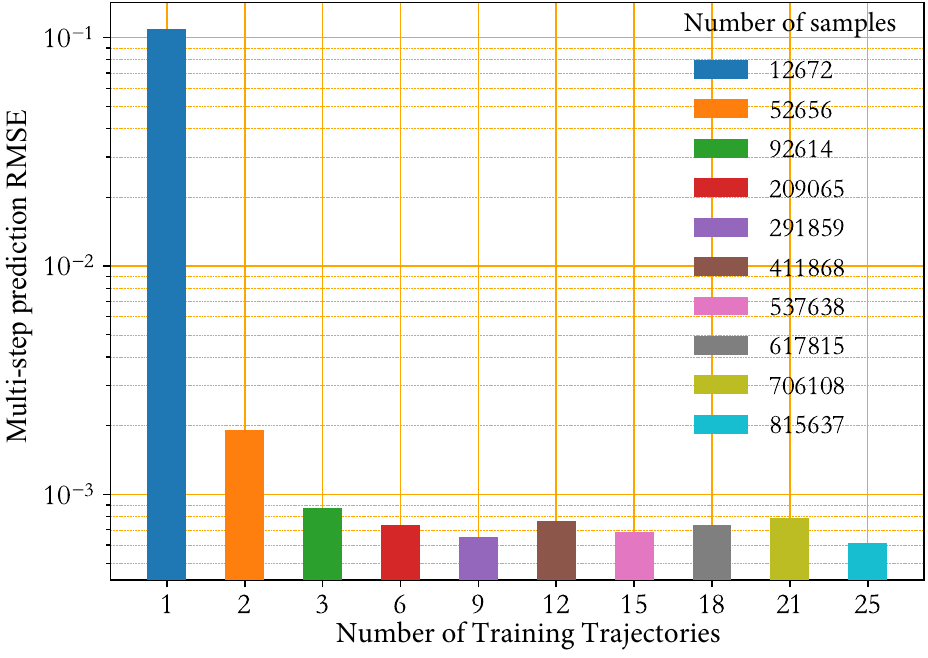}
        \label{fig:koopman_openloop_verify_b}
    }
    \caption{\textcolor{black}{Open-loop multi-step prediction verification results of the learned deep Koopman model.}}
    \label{fig:koopman_openloop_verify}
\end{figure}

\subsection{Ablation Study and Sensitivity Analysis}

\textcolor{black}{To further clarify the role of individual modules and the influence of key tuning parameters, we have also conducted an ablation study and a single-factor sensitivity analysis in a single-obstacle straight-road scenario. In this scenario, the ego vehicle traveled at 30 km/h on a 125 m two-lane straight road and avoided a static obstacle located close to the road centerline, whose center was approximately 60 m downstream from the road entrance.}

\subsubsection{Ablation Study}

\textcolor{black}{The ablation study compared the full LPC framework with three reduced variants, in which the DK predictor, the online actor--critic adaptation mechanism, or the safety term was selectively removed. The corresponding results are summarized in Table~\ref{tab:ablation_results}. The full method completes the obstacle-avoidance task without collision, whereas removing the safety term leads to collision, freezing the actor prevents successful trajectory completion, and replacing the DK predictor with a simplified kinematic model also results in failure. These results show that the performance gain arises from the synergy of the three components rather than from any single module alone: the safety term is essential for maintaining obstacle clearance, the online actor-critic adaptation is important for closed-loop correction and trajectory recovery, and the DK predictor is critical for accurate multi-step prediction and reliable policy update.}

\begin{table}[t]
\centering
\scriptsize
\setlength{\tabcolsep}{6 pt}
\renewcommand{\arraystretch}{0.98}
\begin{threeparttable}
\caption{\textcolor{black}{Ablation study results for the single-obstacle straight-road scenario.}}
\label{tab:ablation_results}
\begin{tabular}{lccccc}
\toprule
Case & Success & Collision & \makecell{StopX\\(m)} & \makecell{Min. gap\\(m)} & \makecell{Min. center\\distance (m)} \\
\midrule
DK+AC+Safety      & 1 & 0 & 120.08 & 1.55 & 3.62 \\
DK+AC             & 0 & 1 & 120.07 & 0.00 & 0.89 \\
DK+FA+Safety      & 0 & 0 &  51.35 & 6.65 & 10.48 \\
Kine.+AC+Safety   & 0 & 1 &  66.90 & 0.00 & 0.72 \\
\bottomrule
\end{tabular}
\begin{tablenotes}[flushleft]
\footnotesize
\item \emph{Remark:} DK = deep Koopman; AC = actor--critic; FA = frozen actor; Kine. = kinematic predictor.
\end{tablenotes}
\end{threeparttable}
\end{table}

\subsubsection{Sensitivity Analysis}

\textcolor{black}{We further perform a single-factor sensitivity analysis with respect to the obstacle-safety scaling factor $\beta_0$, the actor--critic update discount factor $\Gamma$, and the prediction horizon $N_p$, using the nominal setting $(\beta_0,\Gamma,N_p)=(1,1,30)$. Here, $\beta_0$ is introduced only for sensitivity analysis as a nominal scaling factor applied to the obstacle-related safety term. These three factors are selected because they represent three influential and interpretable design dimensions of the proposed controller: $\beta_0$ scales the nominal obstacle-related safety term, $\Gamma$ controls the influence of future value/costate information on the current update across the prediction interval, and $N_p$ determines the planning foresight of the receding-horizon controller. The corresponding results are reported in Table~\ref{tab:sensitivity_single_factor}. Overall, the results show that increasing $\beta_0$ enlarges obstacle clearance but also yields more conservative trajectories, increasing $\Gamma$ improves anticipatory behavior and safety margin with only a moderate change in trajectory deviation, and increasing $N_p$ does not monotonically improve safety, indicating a nontrivial trade-off between planning foresight and obstacle clearance. Among the tested settings, the nominal choice $(1,1,30)$ provides a balanced compromise between clearance and tracking performance.}

\begin{table}[!t]
\centering
\scriptsize
\setlength{\tabcolsep}{3pt}
\renewcommand{\arraystretch}{0.98}
\caption{\textcolor{black}{Single-factor sensitivity analysis results in the single-obstacle straight-road scenario.}}
\label{tab:sensitivity_single_factor}
\resizebox{\columnwidth}{!}{%
\begin{tabular}{ccccccc}
\toprule
Factor & Level & Success & Collision & \makecell{Min. gap\\(m)} & \makecell{Min. center\\distance (m)} & \makecell{Traj. tracking \\error (m)} \\
\midrule
\multirow{3}{*}{$\beta_0$}
& 0.90 & 0 & 1 & 0.000 & 1.935 & 0.246 \\
& 1.00 & 1 & 0 & 0.461 & 2.510 & 0.331 \\
& 1.10 & 1 & 0 & 1.342 & 3.239 & 0.462 \\
\midrule
\multirow{3}{*}{$\Gamma$}
& 0.90 & 0 & 1 & 0.000 & 1.588 & 0.252 \\
& 0.95 & 1 & 0 & 0.232 & 2.234 & 0.330 \\
& 1.00 & 1 & 0 & 0.461 & 2.510 & 0.331 \\
\midrule
\multirow{3}{*}{$N_p$}
& 20 & 1 & 0 & 1.547 & 3.625 & 0.452 \\
& 30 & 1 & 0 & 0.461 & 2.510 & 0.331 \\
& 40 & 0 & 1 & 0.000 & 1.969 & 0.287 \\
\bottomrule
\end{tabular}%
}
\end{table}

\subsection{Simulation Scenarios and Results}
This section evaluates the proposed method through obstacle-avoidance simulations in structured road scenarios. Other low-speed or stationary vehicles are treated as obstacle vehicles and are assumed to have the same dimensions as the HongQi-EHS3.
Four representative scenarios are considered: (\romannumeral1) static obstacle avoidance on a straight two-lane road, where the ego vehicle travels at 30~km/h between two stationary obstacles; (\romannumeral2) dynamic obstacle avoidance on a straight two-lane road, where the obstacles move at 3.6~km/h and 10.8~km/h; (\romannumeral3) overtaking on a straight road, where the ego vehicle travels at 40~km/h and overtakes a vehicle moving at 18~km/h; and (\romannumeral4) overtaking on a curved road under the same vehicle speeds as Scenario~III.

\textit{Evaluation Metrics:}
The methods are evaluated in terms of safety, real-time performance, and driving comfort. The safety metric is defined as
\begin{equation}\label{equRHDHP:safe_index}
	\left\{ \begin{array}{l}
		I_{s,f}=\frac{\bar{d}_{s,f} - l_{\text{ego}}/2 - l_{\text{obs}}/2}{d_{s,f}},\\
		I_{s,r}=\frac{\bar{d}_{s,r}}{d_{s,r}},\\
	\end{array} \right.
\end{equation}
where $l_{\text{ego}}$ and $l_{\text{obs}}$ denote the lengths of the ego and obstacle vehicles, respectively. $I_{s,f}$ measures safety when the ego vehicle starts the avoidance maneuver behind the obstacle, whereas $I_{s,r}$ measures safety when it returns to the target lane ahead of the obstacle. Here, $\bar{d}_{s,f}$ and $\bar{d}_{s,r}$ are the corresponding actual longitudinal distances, and the associated safe distances are defined as $d_{s,f}=2(v_e-v_o)$ and $d_{s,r}=2(v_e+v_o)$, where $v_e$ and $v_o$ are the speeds of the ego and obstacle vehicles, respectively. These distances are measured when the geometric center of the ego vehicle leaves and re-enters the desired lane. The overall safety metric is then
\begin{equation}\label{equRHDHP:safe_index_final}
I_s = \min \left\{ I_{s,f}^{[1]}, I_{s,r}^{[1]}, I_{s,f}^{[2]}, I_{s,r}^{[2]}, \dots, I_{s,f}^{[n_o]}, I_{s,r}^{[n_o]} \right\},
\end{equation}
where $I_{s,f}^{[i]}$ and $I_{s,r}^{[i]}$ are the safety indices associated with the $i$-th obstacle vehicle, and $n_o$ is the total number of obstacles.

The real-time performance index is defined as
\begin{equation}\label{equRHDHP:real_time_index}
	I_c=\frac{1}{(k_e-k_s)t_s}\sum_{k=k_s}^{k_e}{T_k},
\end{equation}
where $t_s$ is the sampling interval, $T_k$ is the computation time in the $k$-th control cycle, and $k_s$ and $k_e$ denote the start and end time steps of the maneuver. A value of $I_c>1$ indicates that the method cannot satisfy the real-time requirement; smaller values imply better computational efficiency.

The driving comfort index is defined as the average curvature of the obstacle-avoidance trajectory:
\begin{equation}\label{equRHDHP:realtime_index}
	I_{\rho} = \frac{1}{k_e-k_s}\sum_{k=k_s}^{k_e} \|\rho_k\|,
\end{equation}
where $\rho_k$ is the trajectory curvature at time step $k$. A smaller $I_{\rho}$ indicates better ride comfort.

\textit{Scenario I - Static Obstacle Avoidance on Straight Road:} In Scenario I, the AV navigates a straight two-lane road at 30 km/h, tracking the lane centerline while avoiding two stationary obstacle vehicles positioned near the centerline on both sides. Fig.~\ref{fig:scenario01} compares the avoidance trajectories of the proposed method, CBF-MPC, and LMPCC. All methods successfully avoid collisions, but the proposed method generates a safer and smoother path by leveraging enhanced CFS constraints, initiating avoidance earlier than benchmarks. In contrast, CBF-MPC and LMPCC delay avoidance until closer to obstacles, compromising safety (Table \ref{tabRHDHP:combined_metrics}, $I_s = 0.50$ for Ours vs. 0.08 for both benchmarks).
Furthermore, Fig.~\ref{fig:scenario01_error} shows lateral displacement and heading angle errors, revealing negligible overshoot ($<0.1 m$) for the proposed method compared to ~0.7 m for CBF-MPC and LMPCC when returning to the reference path. This smoothness contributes to a 10\% lower comfort index (Table \ref{tabRHDHP:combined_metrics}, $I_p = 0.0086$ vs. 0.0096 and 0.0093).
Additionally, the proposed method’s real-time performance is superior, with a control cycle time of ~5 ms (Table \ref{tabRHDHP:combined_metrics}, $I_c = 0.25$ vs. 2.21 and 2.29), an 89\% improvement driven by the deep Koopman model’s efficient linear dynamics and the actor-critic MPC framework’s polynomial complexity, highlighting the method’s ability to balance safety, comfort, and computational efficiency in static obstacle avoidance scenarios.

\begin{figure}[h]
    \centering
		\centering
        \subfloat[Ours]{\includegraphics[width=1\columnwidth]{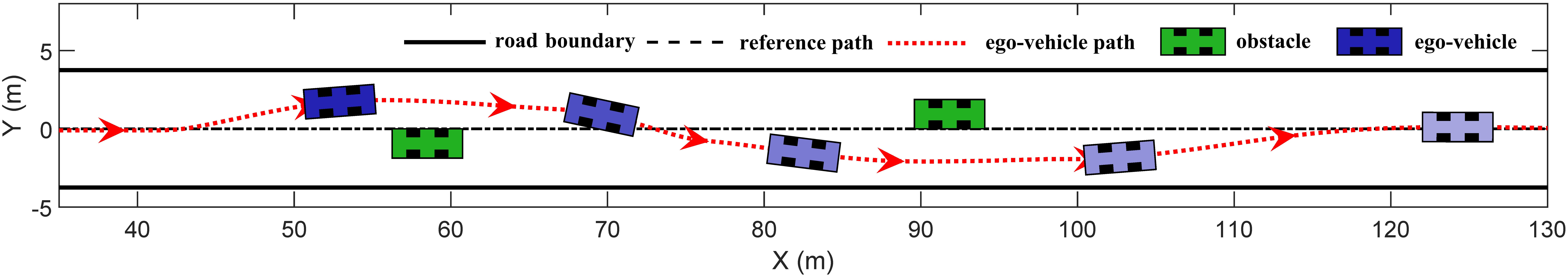}\label{fig:scenario01-1}}\quad
		\centering
        \subfloat[CBF-MPC]{\includegraphics[width=1\columnwidth]{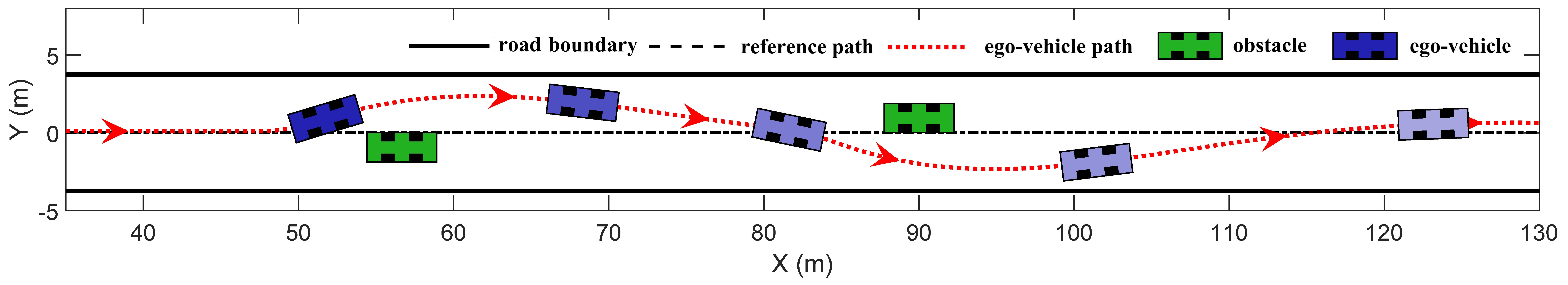}\label{fig:scenario01-2}}\quad        
		\centering		
        \subfloat[LMPCC]{\includegraphics[width=1\columnwidth]{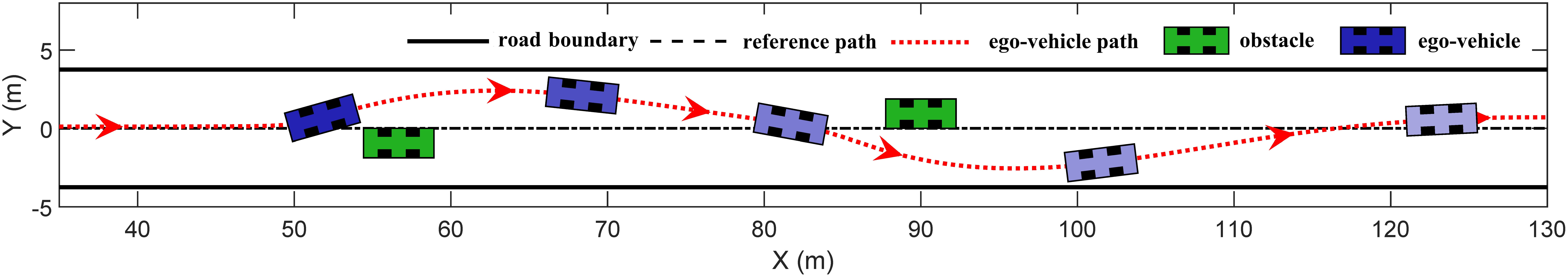}\label{fig:scenario01-3}}
	\caption{Vehicle obstacle avoidance path in Scenario I. The subfigures illustrate the avoidance path of the proposed method, CBF-MPC, and LMPCC in navigating around stationary obstacles positioned on either side of the road centerline.}
	\label{fig:scenario01}
\end{figure}

\begin{figure}
	\begin{center}
		\includegraphics[width=1\columnwidth]{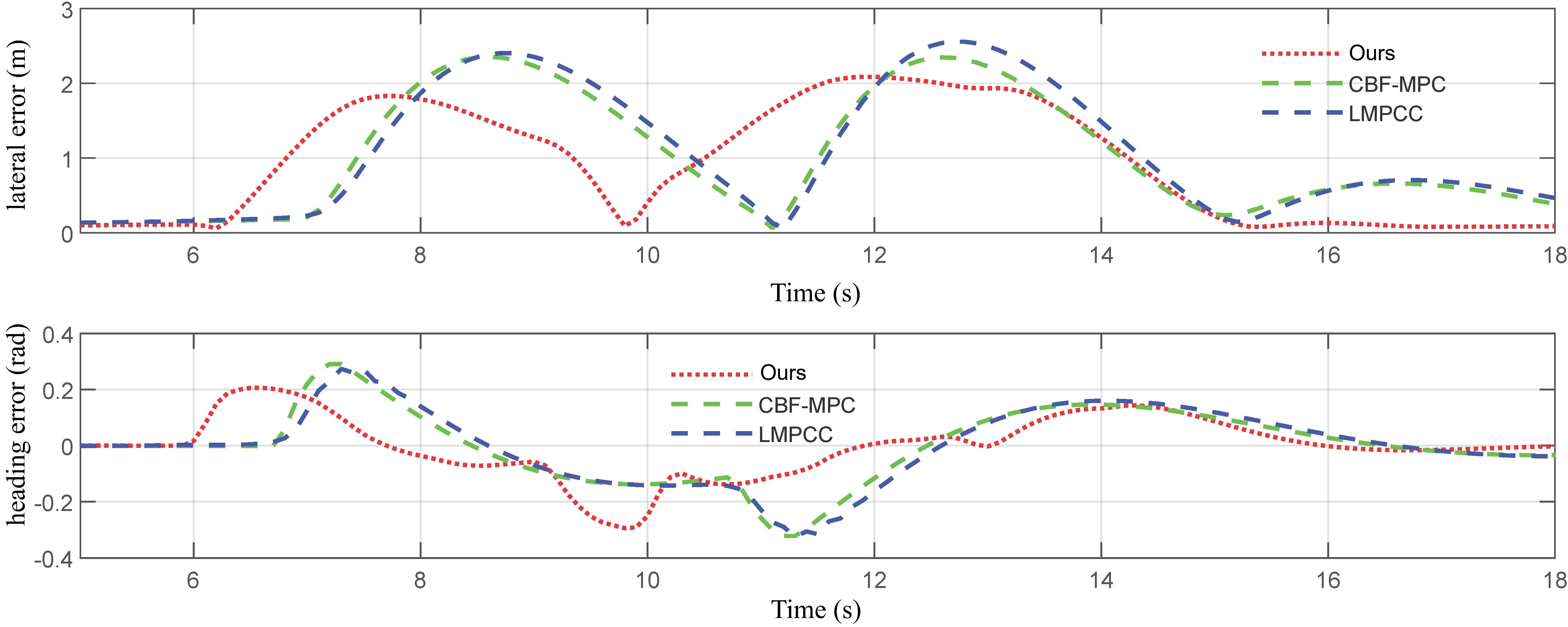}
		\caption{Lateral displacement and heading angle errors of vehicle obstacle avoidance paths in Scenario I.}
		\label{fig:scenario01_error}
	\end{center}
\end{figure}

\textit{Scenario II - Dynamic Obstacle Avoidance on Straight Road:} Scenario II challenges the AV to navigate a straight two-lane road at 30 km/h, tracking the lane centerline while dodging two dynamic obstacle vehicles moving at 3.6 km/h and 10.8 km/h on either side.  
As depicted in Fig.~\ref{fig:scenario02}, the proposed method adeptly maneuvers around these moving obstacles, maintaining a robust safety margin. Unlike CBF-MPC and LMPCC, which react late and risk closer encounters (Table \ref{tabRHDHP:combined_metrics}, $I_s = 0.40$ for Ours vs. 0.10 for both benchmarks), the proposed method anticipates obstacle motion early, thanks to its deep Koopman model’s precise dynamics prediction and potential field-driven safety constraints.
Fig.~\ref{fig:scene02_error}  illustrates the AV’s lateral displacement and heading angle errors, showcasing the proposed method’s seamless return to the reference trajectory with minimal overshoot ($<0.1m$) compared to ~0.7 m for benchmarks. This precision yields a superior comfort index (Table \ref{tabRHDHP:combined_metrics}, $I_p = 0.0074$ vs. 0.0079 and 0.0081), reflecting smoother steering adjustments. Computationally, the method remains highly efficient, solving each control cycle in ~7.4 ms (Table \ref{tabRHDHP:combined_metrics}, $I_c = 0.37$ vs. 2.32 and 2.35), an 84\% improvement over benchmarks. This efficiency stems from the actor-critic framework’s ability to rapidly adapt policies in the Koopman observation space, ensuring real-time responsiveness even with dynamic obstacles. These results underscore the strength of the method to handle dynamic obstacles while prioritizing safety and ride quality.

\begin{figure}[h]
    \centering
		\centering
        \subfloat[Ours]{\includegraphics[width=1\columnwidth]{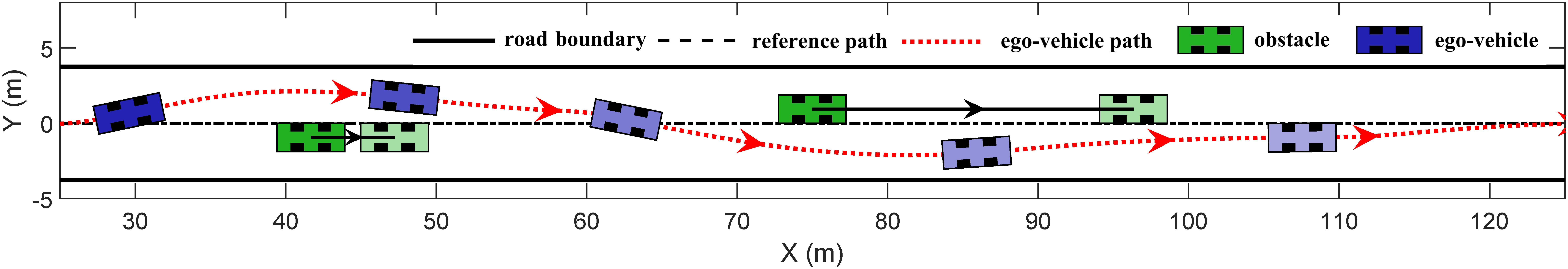}\label{fig:scenario02-1}}\quad
		\centering
        \subfloat[CBF-MPC]{\includegraphics[width=1\columnwidth]{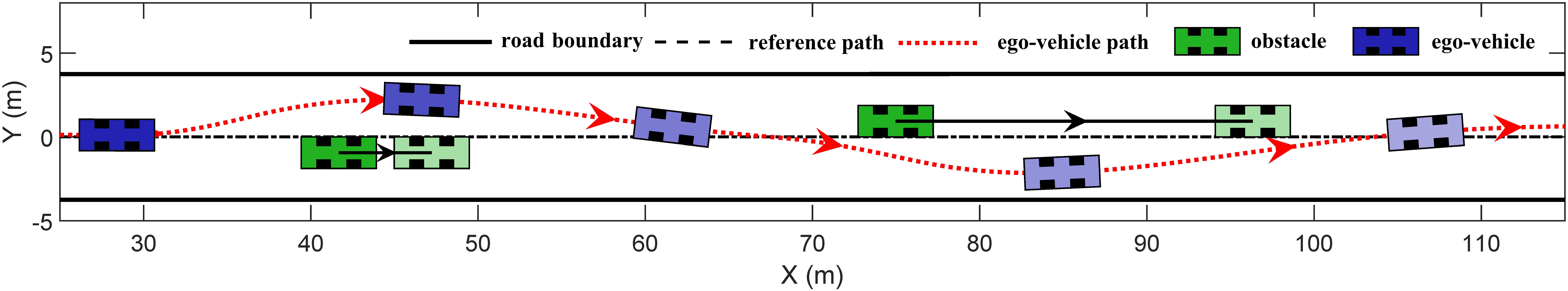}\label{fig:scenario02-2}}\quad        
		\centering		
        \subfloat[LMPCC]{\includegraphics[width=1\columnwidth]{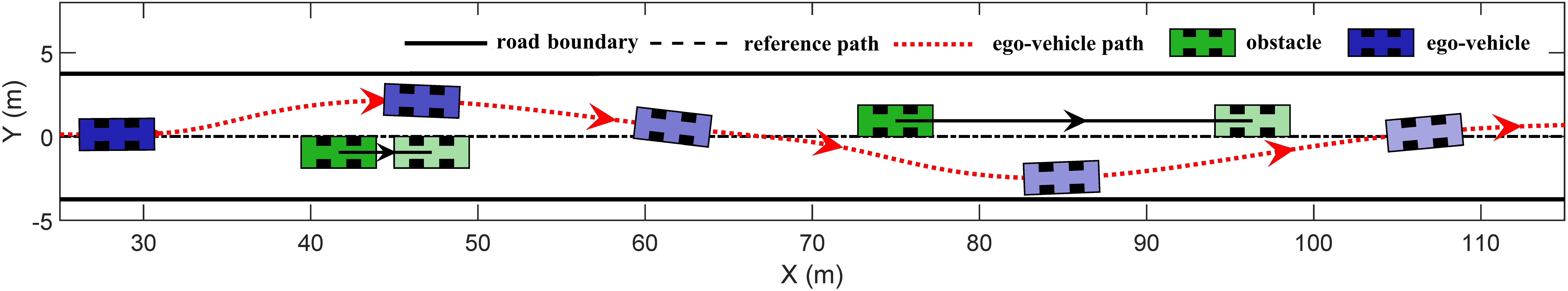}\label{fig:scenario02-3}}
	\caption{Vehicle obstacle avoidance path in Scenario II. The subfigures illustrate the avoidance path of the proposed method, CBF-MPC, and LMPCC in navigating around moving obstacle vehicles on a straight road.}
	\label{fig:scenario02}
\end{figure}

\begin{figure}[htb]
    \begin{center}
        \includegraphics[width=1\columnwidth]{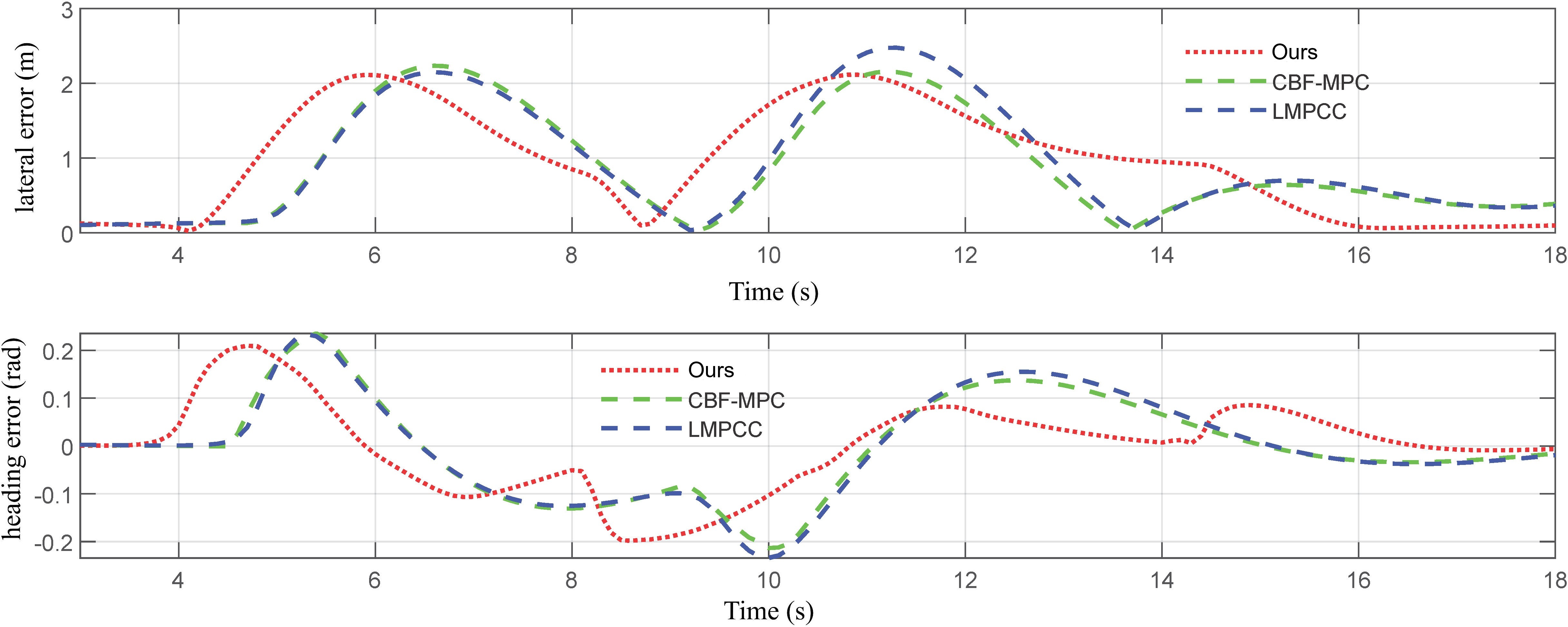}
        \caption{Lateral displacement and heading angle errors of vehicle obstacle avoidance paths in Scenario II.}
        \label{fig:scene02_error}
    \end{center}
\end{figure}

\textit{Scenario III-Overtaking Maneuvering on Straight Road:}
Scenario III tests the AV at 40 km/h overtaking a slower obstacle vehicle traveling at 18 km/h on a straight two-lane road, requiring precise lane changes to ensure safety. 
Fig.~\ref{fig:scenario1} illustrates the overtaking paths, with the proposed method maintaining the overtaking lane longer to secure a safe longitudinal distance before returning to the desired lane. In contrast, CBF-MPC and LMPCC revert prematurely, risking proximity to the obstacle (Table \ref{tabRHDHP:combined_metrics}, $I_s = 0.38$ for Ours vs. 0.01 and 0.03).
As shown in Fig.~\ref{fig:scenario1_error}, the proposed method achieves a smooth lane return with minimal lateral displacement and heading angle deviations, avoiding the ~0.7 m overshoot observed in benchmarks. This precision delivers a 27\% better comfort index (Table \ref{tabRHDHP:combined_metrics}, $I_p = 0.0072$ vs. 0.0099 and 0.010). Computationally, the method sustains high efficiency, with a control cycle time of ~7.4 ms (Table \ref{tabRHDHP:combined_metrics}, $I_c = 0.37$ vs. 2.34 and 2.38), an 84\% improvement driven by the deep Koopman model’s linear dynamics and the actor-critic framework’s rapid policy updates. These results demonstrate the method’s adeptness during overtaking maneuvers, balancing safety, smoothness, and real-time performance.

\begin{figure}[h]
    \centering
		\centering
        \subfloat[Ours]{\includegraphics[width=1\columnwidth]{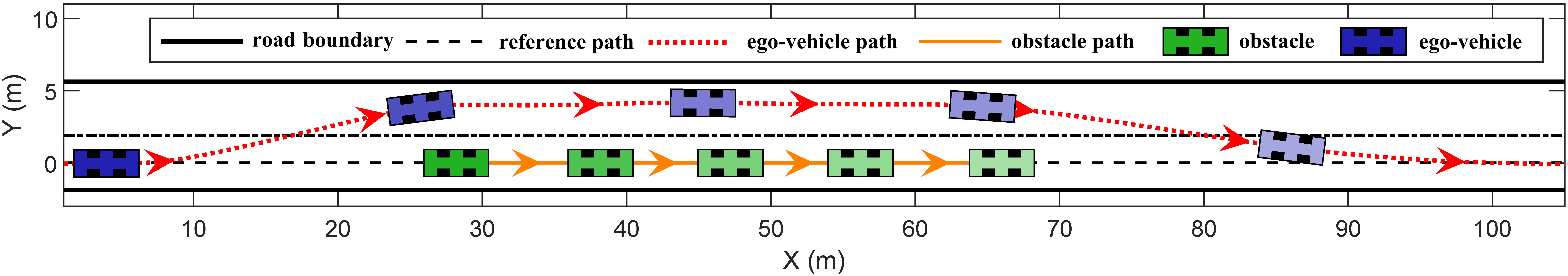}\label{fig:scenario1-1}}\quad
		\centering
        \subfloat[CBF-MPC]{\includegraphics[width=1\columnwidth]{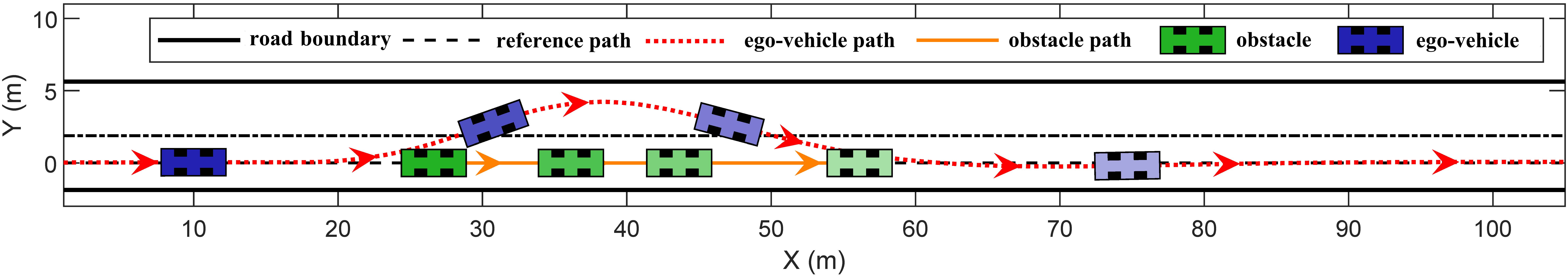}\label{fig:scenario1-2}}\quad        
		\centering		
        \subfloat[LMPCC]{\includegraphics[width=1\columnwidth]{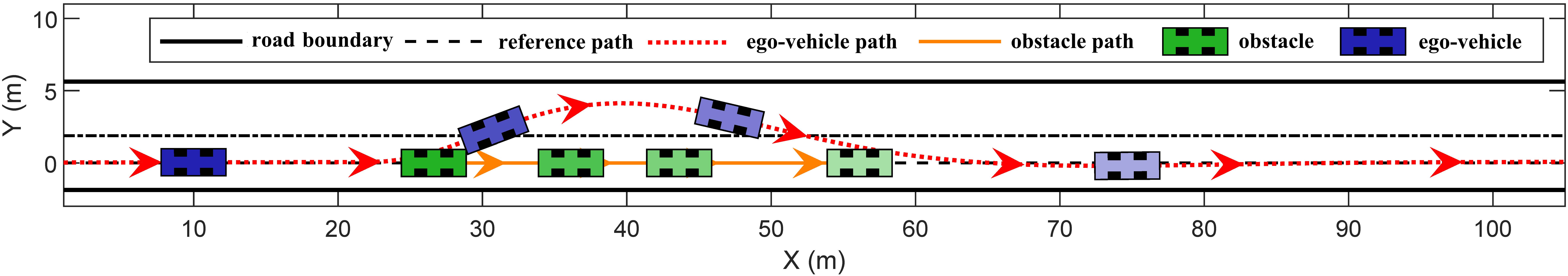}\label{fig:scenario1-3}}
	\caption{Vehicle overtaking paths in Scenario III, the subfigures illustrate the overtaking paths generated by the proposed method, CBF-MPC, and LMPCC as they deal with dynamic obstacles in the straight road.}
	\label{fig:scenario1}
\end{figure}

\begin{figure}
	\begin{center}
		\includegraphics[width=0.9\columnwidth]{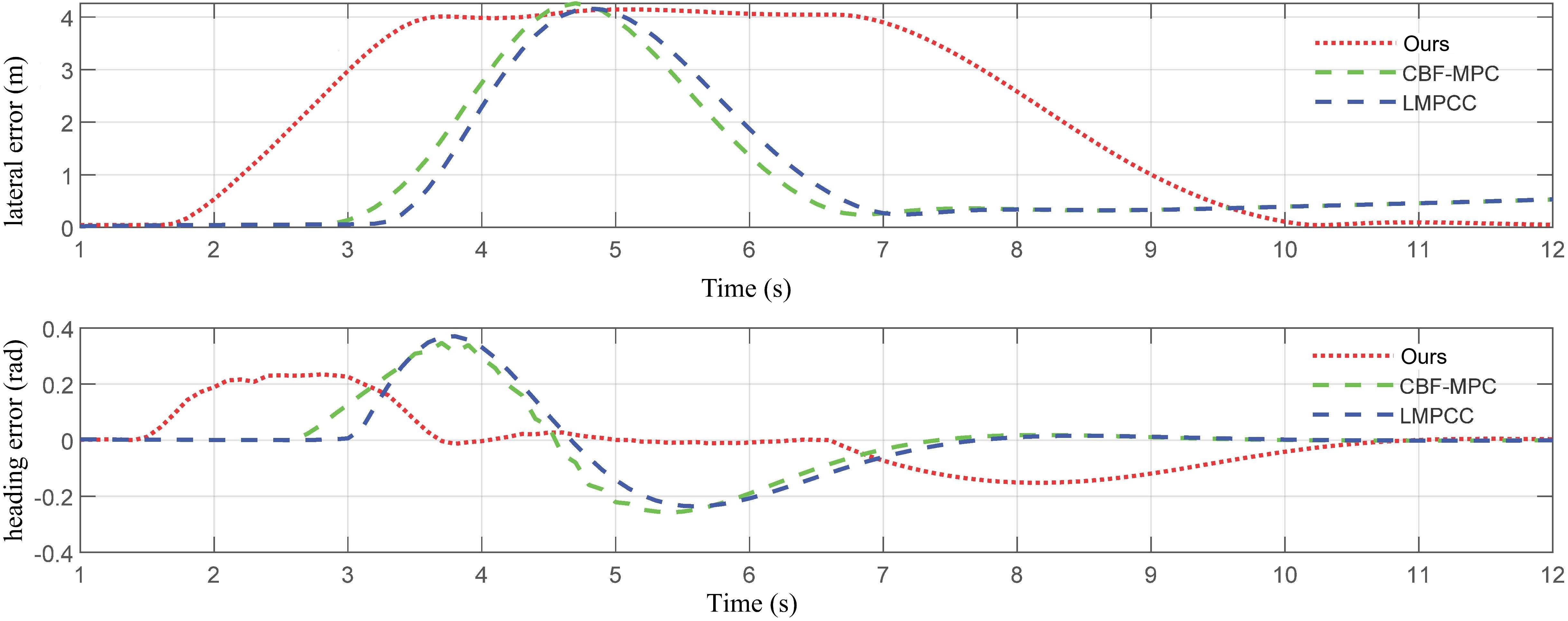}
		\caption{The lateral displacement and heading angle deviation of the vehicle's obstacle avoidance paths in Scenario III.}
		\label{fig:scenario1_error}
	\end{center}
\end{figure}

\textit{Scenario IV - Overtaking maneuver on the curved road:}
Scenario IV evaluates the AV overtaking an obstacle vehicle moving at 18 km/h on a curved two-lane road while maintaining 40 km/h, demanding precise trajectory planning to navigate road curvature and ensure safety. 
Fig.~\ref{fig:scenario2} shows the proposed method’s overtaking path, which prudently extends the overtaking lane duration to maintain a safe longitudinal gap before lane return. Conversely, CBF-MPC and LMPCC hastily revert to the desired lane, compromising safety (Table~\ref{tabRHDHP:combined_metrics}, $I_s = 0.25$ for Ours vs. 0.01 and 0.07). Fig.~\ref{fig:scenario2_error} highlights the proposed method’s stable lateral displacement and reduced heading angle fluctuations, contrasting with benchmarks’ larger deviations. This stability yields a 7\% improved comfort index (Table~\ref{tabRHDHP:combined_metrics}, $I_p = 0.013$ vs. 0.014 for both benchmarks), reflecting smoother navigation of the curve. The method also maintains computational agility, with a control cycle time of ~8 ms (Table~\ref{tabRHDHP:combined_metrics}, $I_c = 0.40$ vs. 2.36 and 2.33), an 83\% improvement driven by the actor-critic framework’s efficient policy optimization. These results affirm the method’s capability to master complex overtaking on curved roads, ensuring safety, comfort, and real-time performance.

\begin{figure}[h]
    \centering
		\centering
        \subfloat[Ours]{\includegraphics[width=0.28\columnwidth]{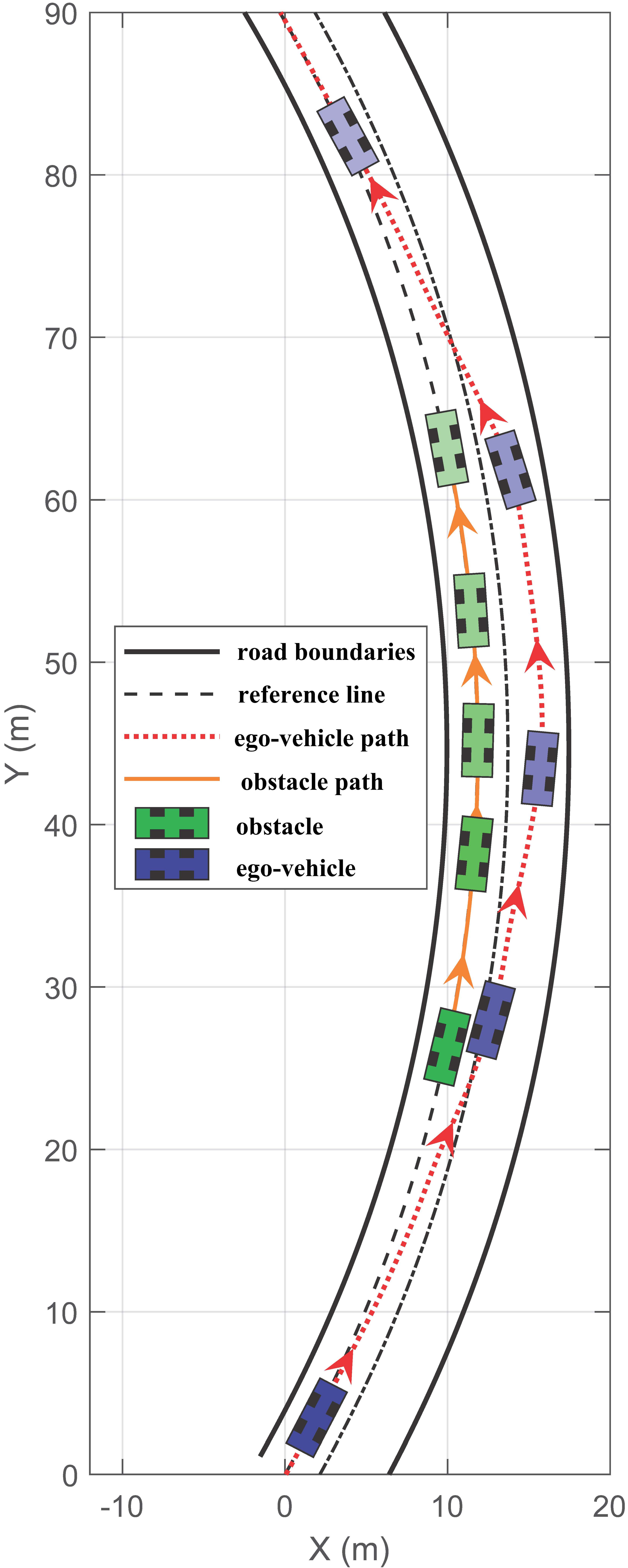}\label{fig:scenario2-1}}\quad
		\centering
        \subfloat[CBF-MPC]{\includegraphics[width=0.28\columnwidth]{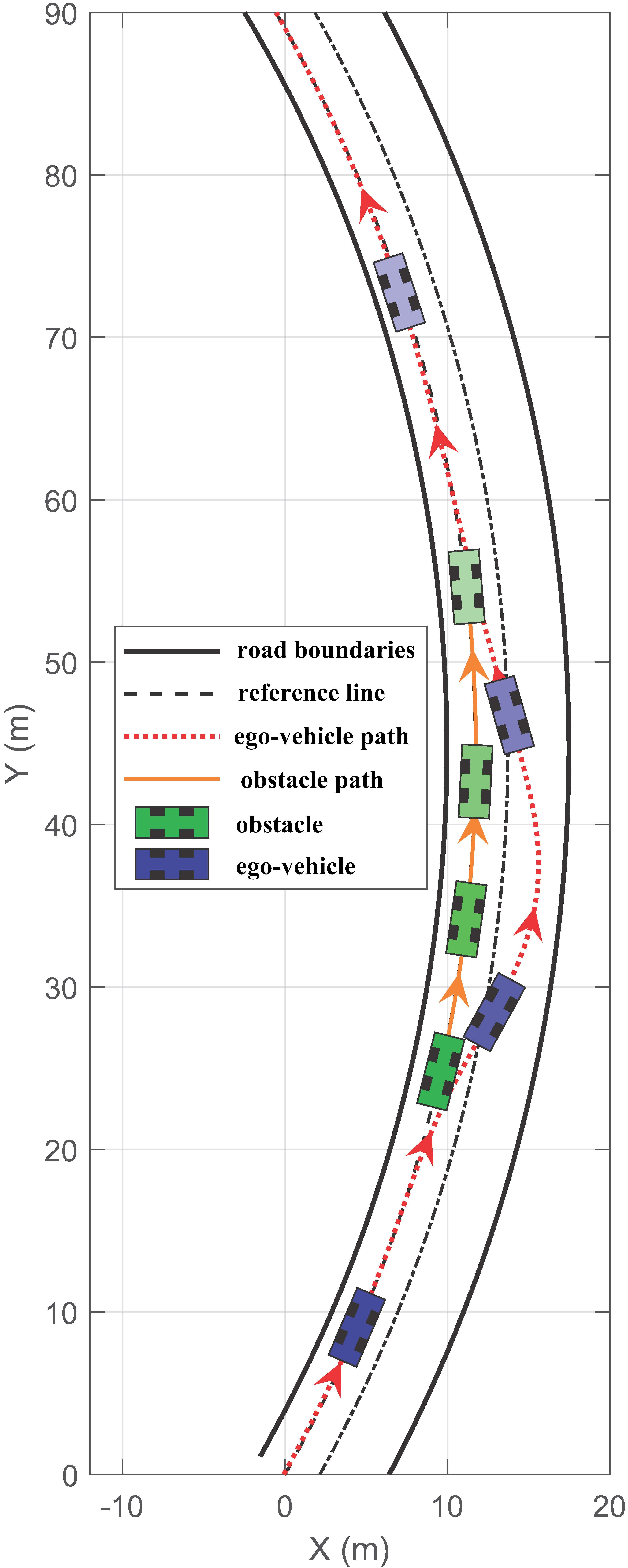}\label{fig:scenario2-2}}\quad        
		\centering		
        \subfloat[LMPCC]{\includegraphics[width=0.28\columnwidth]{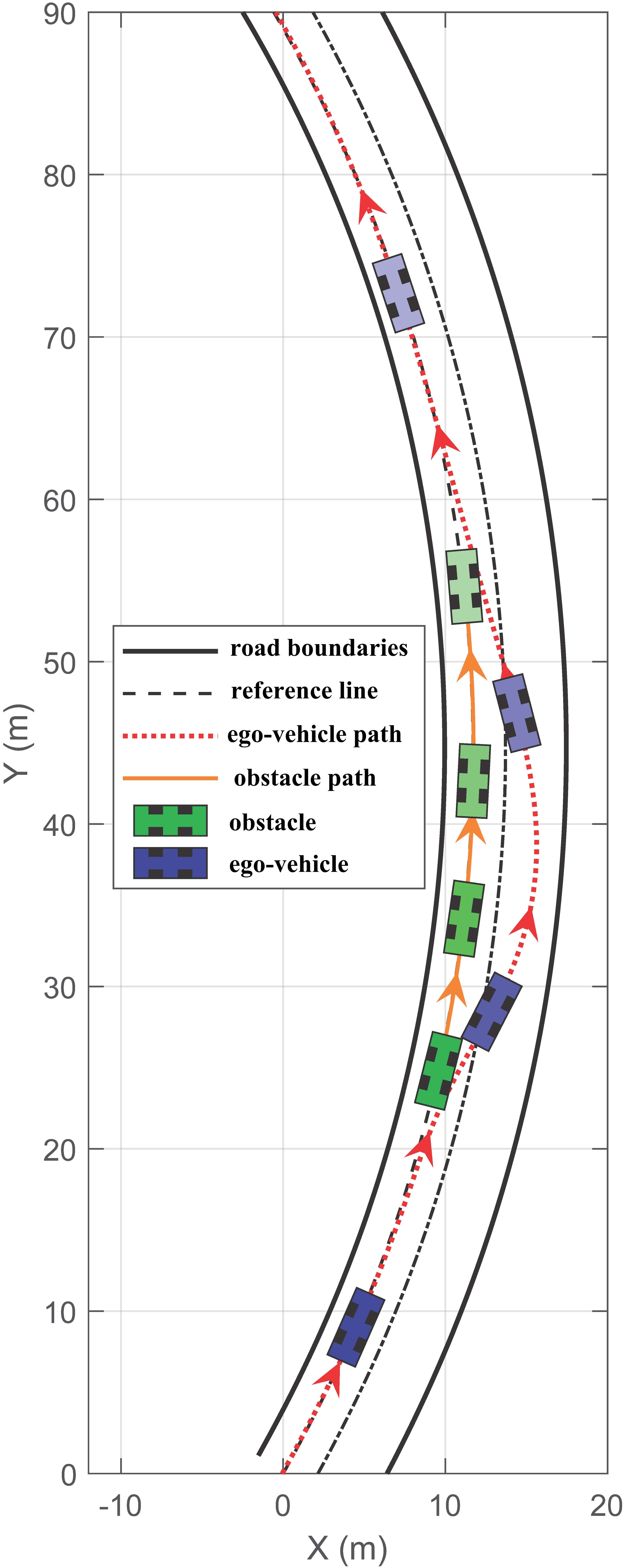}\label{fig:scenario2-3}}
	\caption{Vehicle overtaking paths in Scenario IV, the subfigures illustrate the overtaking paths generated by the proposed method, CBF-MPC, and LMPCC as they deal with dynamic obstacles in a curved road.}
	\label{fig:scenario2}
\end{figure}

\begin{figure}[htb]
	\begin{center}
		\includegraphics[width=1\columnwidth]{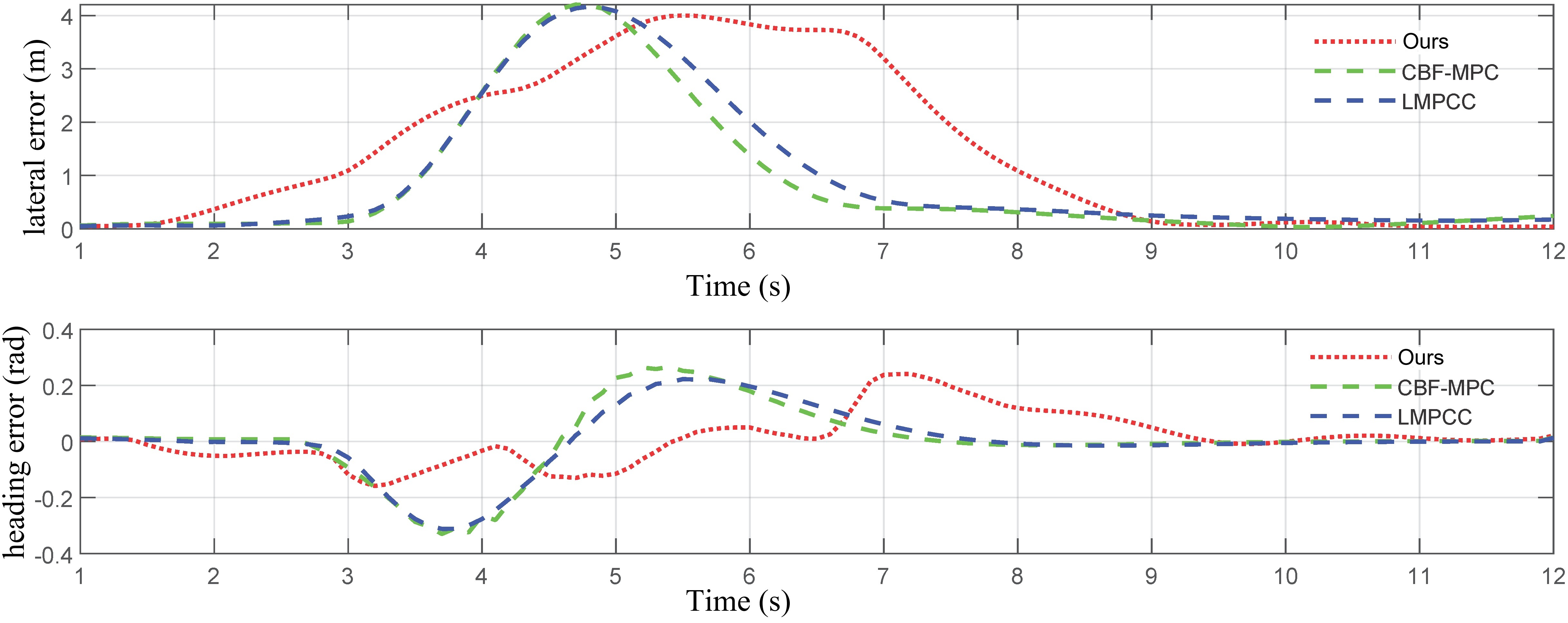}
		\caption{The lateral displacement and heading angle deviation of the vehicle's obstacle avoidance trajectory in Scenario IV.}
		\label{fig:scenario2_error}
	\end{center}
\end{figure}


\begin{table}[htbp]
    \centering
    \caption{Comparison of Evaluation Metrics for the proposed, CBF-MPC, and LMPCC in Scenarios I-IV}
    \label{tabRHDHP:combined_metrics}
    \begin{tabular}{ccccc}
        \toprule
        Scenario & Metric & Ours & CBF-MPC & LMPCC \\
        \midrule
        \multirow{3}{*}{I} 
        & Safety $I_s$ & \textbf{0.50} & 0.08 & 0.08 \\
        & Real-time $I_c$ & \textbf{0.25} & 2.21 & 2.29 \\
        & Driving comfort $I_\rho$ & \textbf{0.0086} & 0.0096 & 0.0093 \\
        \midrule
        \multirow{3}{*}{II} 
        & Safety $I_s$ & \textbf{0.40} & 0.10 & 0.10 \\
        & Real-time $I_c$ & \textbf{0.37} & 2.32 & 2.35 \\
        & Driving comfort $I_\rho$ & \textbf{0.0074} & 0.0079 & 0.0081 \\
        \midrule
        \multirow{3}{*}{III} 
        & Safety $I_s$ & \textbf{0.38} & 0.01 & 0.03 \\
        & Real-time $I_c$ & \textbf{0.37} & 2.34 & 2.38 \\
        & Driving comfort $I_\rho$ & \textbf{0.0072} & 0.0099 & 0.010\\
        \midrule
        \multirow{3}{*}{IV} 
        & Safety $I_s$ & \textbf{0.25} & 0.01 & 0.07 \\
        & Real-time $I_c$ & \textbf{0.40} & 2.36 & 2.33 \\
        & Driving comfort $I_\rho$ & \textbf{0.013} & 0.014 & 0.014 \\
        \bottomrule
    \end{tabular}
\end{table}

\begin{figure}[htb]
	\begin{center}
		\includegraphics[width=1\columnwidth]{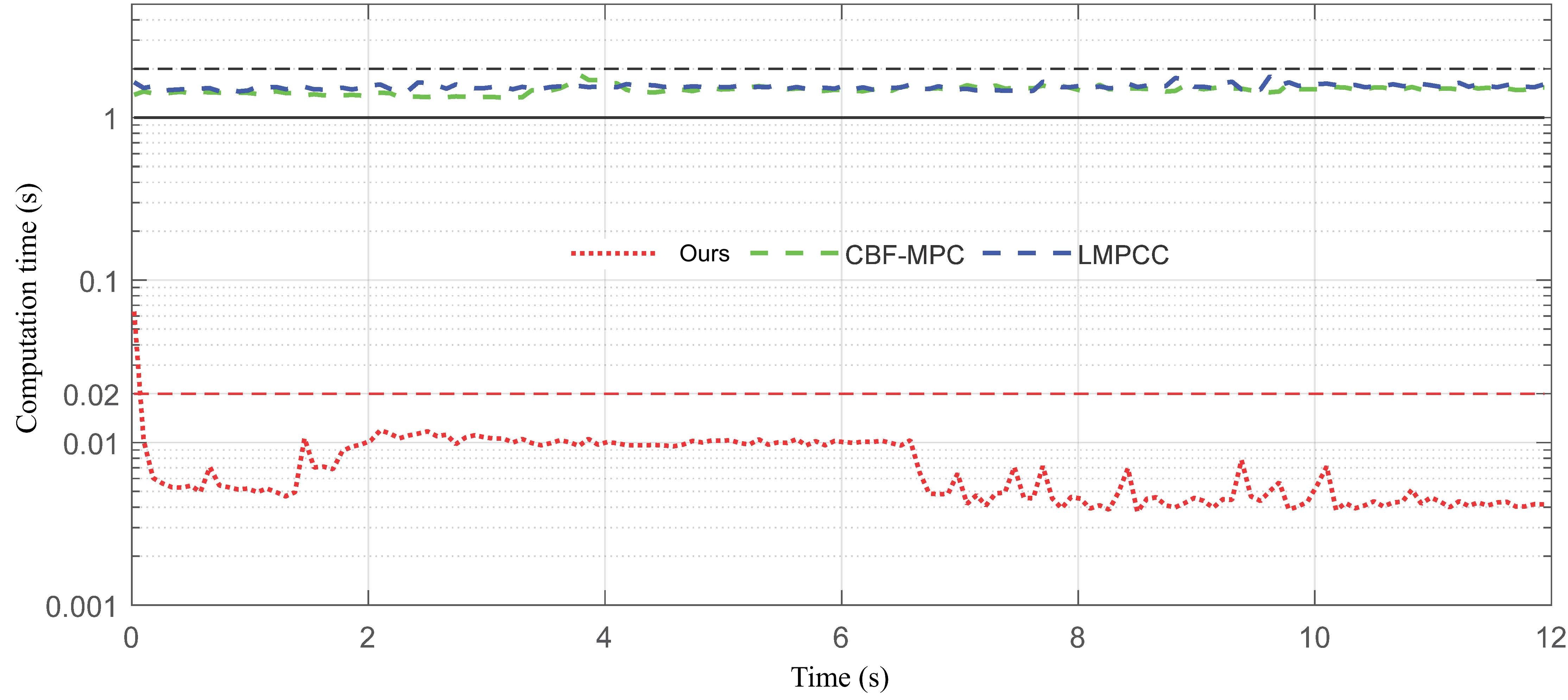}
		\caption{Computational time of different methods per time step.}
		\label{fig:time_costs}
	\end{center}
\end{figure}


To assess computational efficiency across Scenarios I–IV, Fig.~\ref{fig:time_costs} compares the per-cycle computation times of the proposed method, CBF-MPC, and LMPCC, with a 20 ms threshold (red dashed line) indicating real-time feasibility. The proposed method consistently achieves cycle times of 5–8 ms ($I_c = 0.25–0.40$), well below the threshold, while CBF-MPC and LMPCC average ~1.5 s ($I_c = 11.70–11.89$), exceeding real-time limits by over an order of magnitude. Results indicate an 84.7\% average improvement in $I_c$ reflecting the deep Koopman model’s streamlined linear dynamics and the actor-critic framework’s polynomial complexity. These efficiencies enable satisfying real-time performance, which is critical for dynamic AV motion planning, unlike the computationally intensive nonlinear optimization of benchmarks.

\subsection{Real-World Experiment Validation}

\begin{figure}  
    \centering
    \includegraphics[width=0.99\columnwidth]{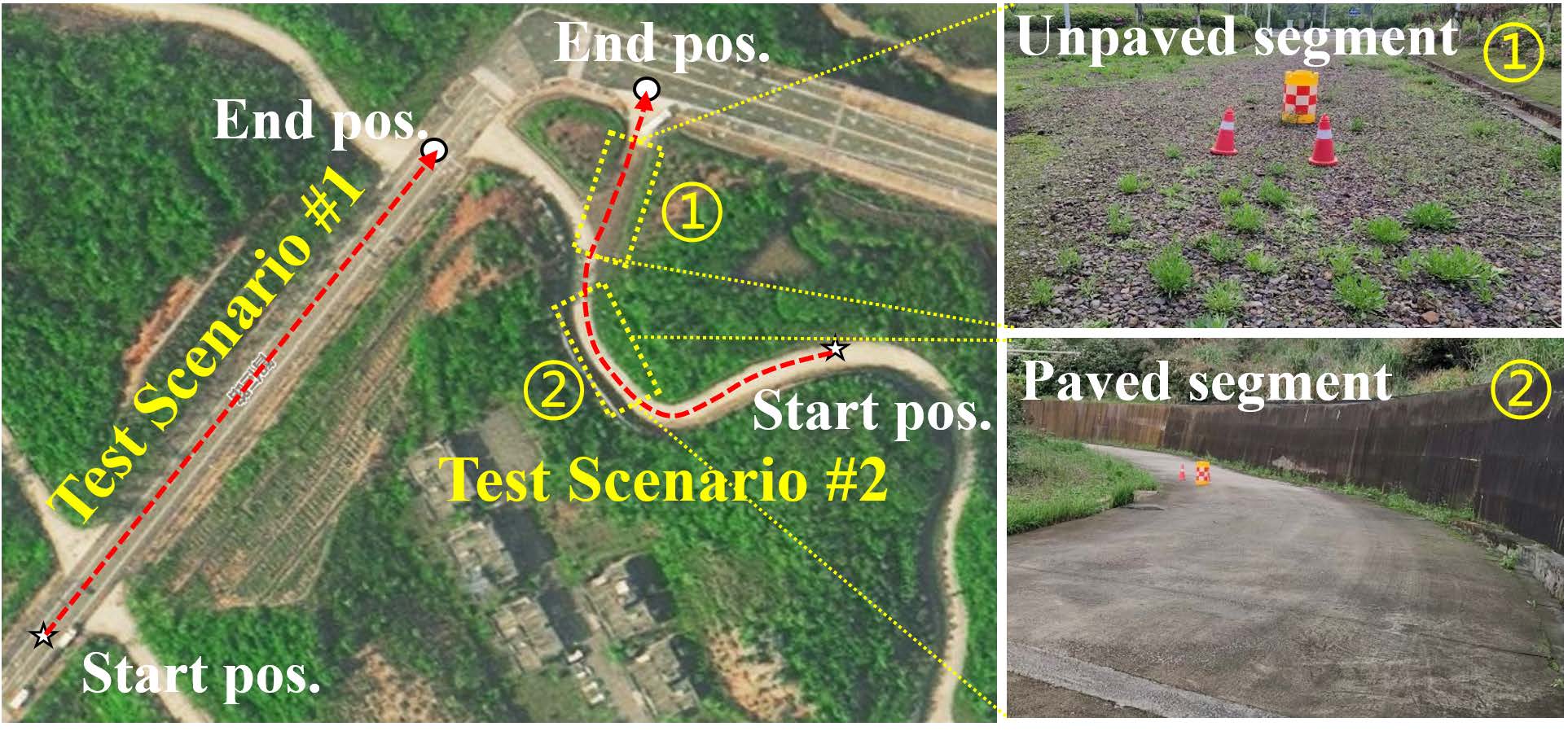}
    \caption{Real-world experiment scenarios description.}
    \label{fig:real_test_scenarios}
\end{figure}

\textcolor{black}{Two representative real-world experiment scenarios were designed to further evaluate the practical applicability of the proposed method on a Hongqi EHS3 platform equipped with an Intel i9-12900H processor and 32\,GB RAM. During the real-world tests, the system operated with a sampling interval of $t_s = 0.02~\mathrm{s}$. As illustrated in Fig.~\ref{fig:real_test_scenarios}, scenario~1 was conducted on a structured road and involves a successive double-obstacle avoidance task on a straight-line segment. Scenario~2 was designed as a more challenging unseen test scenario that is not fully covered by the training data distribution. It consisted of a dual obstacle-avoidance task along a road segment with continuously varying curvature and a transition from a paved surface to an unpaved gravel surface. In particular, the surface transition and the unevenness of the unpaved road introduce additional disturbances to vehicle motion, which provides a more stringent test of the robustness and stability of the proposed method under real-world condition variations. In the real-vehicle implementation, the controller hyperparameters were set to $\gamma_o = 19$, $g = 0.03$, $\alpha_c = 0.51$, and $\alpha_a = 0.003$, while the actor and critic network structures were kept the same as those used in simulation. Cone-shaped barrels were employed as static proxies for obstacle vehicles, and CBF-MPC was implemented as the baseline method for comparison.}

\textcolor{black}{For safety considerations, a more conservative obstacle representation was uniformly adopted in the experiment, using inflated obstacle envelopes for all methods. This conservative setting may increase the measured longitudinal clearance at lane departure and lane re-entry, and thus lead to larger absolute safety-index values than those observed in simulation.}

\begin{table}[htbp]
    \centering
    \caption{Comparison of real-vehicle evaluation metrics between the proposed method and the baseline (CBF-MPC)}
    \label{tabRHDHP:real_world_metrics}
    \scalebox{0.9}{
    \begin{tabular}{p{1 cm}cccc}
        \toprule
        Scenario & Method & Safety $I_s$ & Real-time $I_c$ & Driving Comfort $I_\rho$ \\
        \midrule
        \multirow{2}{=}{Straight road}
        & Ours    & 0.950 & 0.087 & 0.0339 \\
        & CBF-MPC & 1.259 & 2.230 & 0.0379 \\
        \midrule
        \multirow{2}{=}{Curved road}
        & Ours    & 0.301 & 0.104 & 0.1218 \\
        & CBF-MPC & 0.290 & 2.380 & 0.1394 \\
        \bottomrule
    \end{tabular}}
\end{table}

\begin{figure*}  
    \centering
    \includegraphics[width=\textwidth]{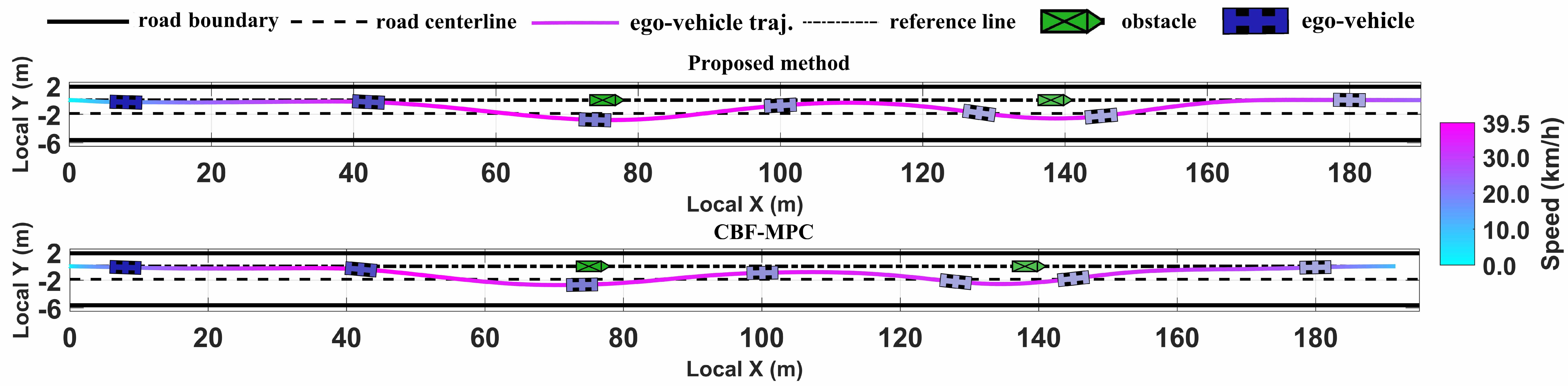}
    \caption{Real-world experiments: vehicle obstacle avoidance trajectories on a straight road (Proposed vs. CBF-MPC).}
    \label{fig:real_test_1}
\end{figure*}

\subsubsection{Test Scenario 1-- obstacle avoidance on a straight road}
\textcolor{black}{The first experiment considered a successive double-obstacle avoidance task on a straight structured road, in which the ego vehicle starts from the left lane and reaches a peak speed close to $40~\mathrm{km/h}$ during the maneuver. As shown in Fig.~\ref{fig:real_test_1}, the ego vehicle successively avoids two static obstacles and then returns to the target lane. The in-vehicle and outside-view snapshots in Fig.~\ref{fig:real_world_expe_all}~(a) further illustrate the key stages of the maneuver, including obstacle perception, lane departure, intermediate recovery, and final return.}
\textcolor{black}{The proposed method completed the maneuver safely and smoothly. As reported in Table~\ref{tabRHDHP:real_world_metrics}, it achieved a safety index of $I_s=0.950$, a real-time index of $I_c=0.087$, and a comfort index of $I_{\rho}=0.0339$. In addition, the maximum lateral acceleration during the maneuver was about $2.10~\mathrm{m/s^2}$, indicating that the real-vehicle test involved a non-negligible lateral dynamic response rather than a purely kinematic case.}


\begin{figure*}[t]
    \centering
    \subfloat[Obstacle avoidance on a straight structured road.]{
        \includegraphics[width=0.46\textwidth]{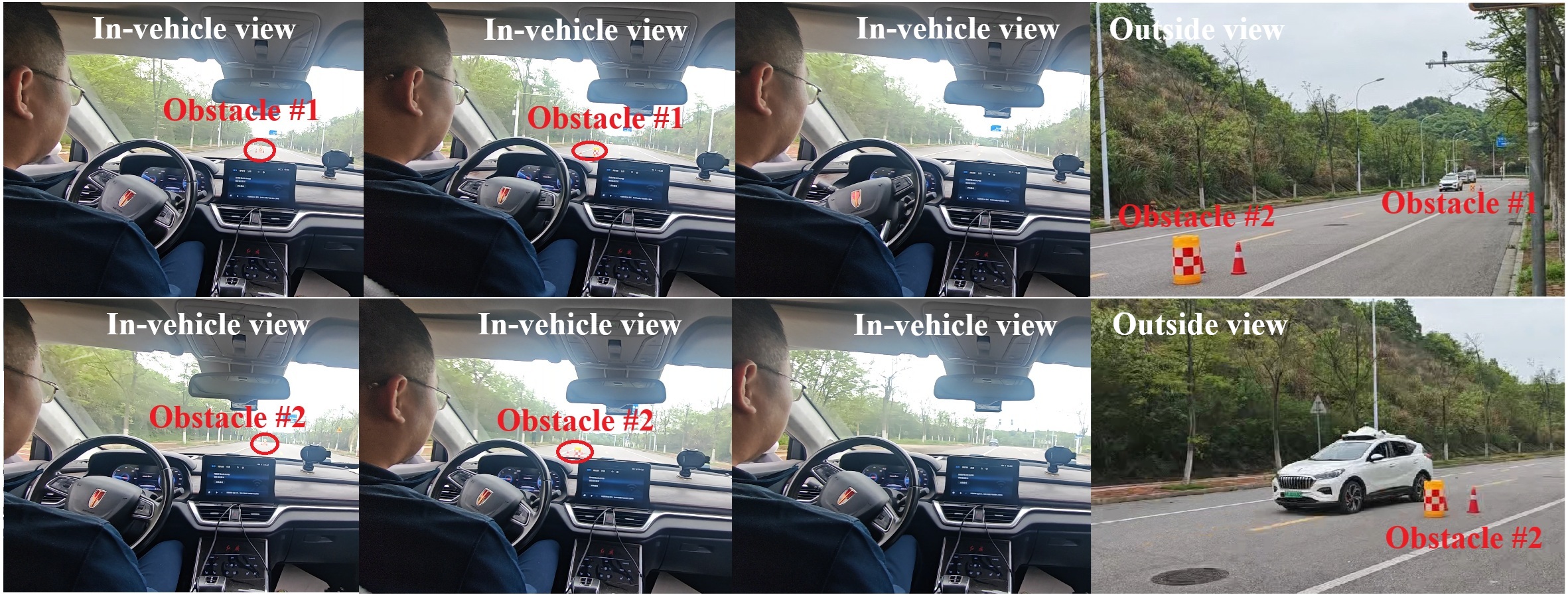}
        \label{fig:real_world_expe_sc1}
    }\hfill
    \subfloat[Obstacle avoidance on a curved-road scenario with surface transition.]{
        \includegraphics[width=0.51\textwidth]{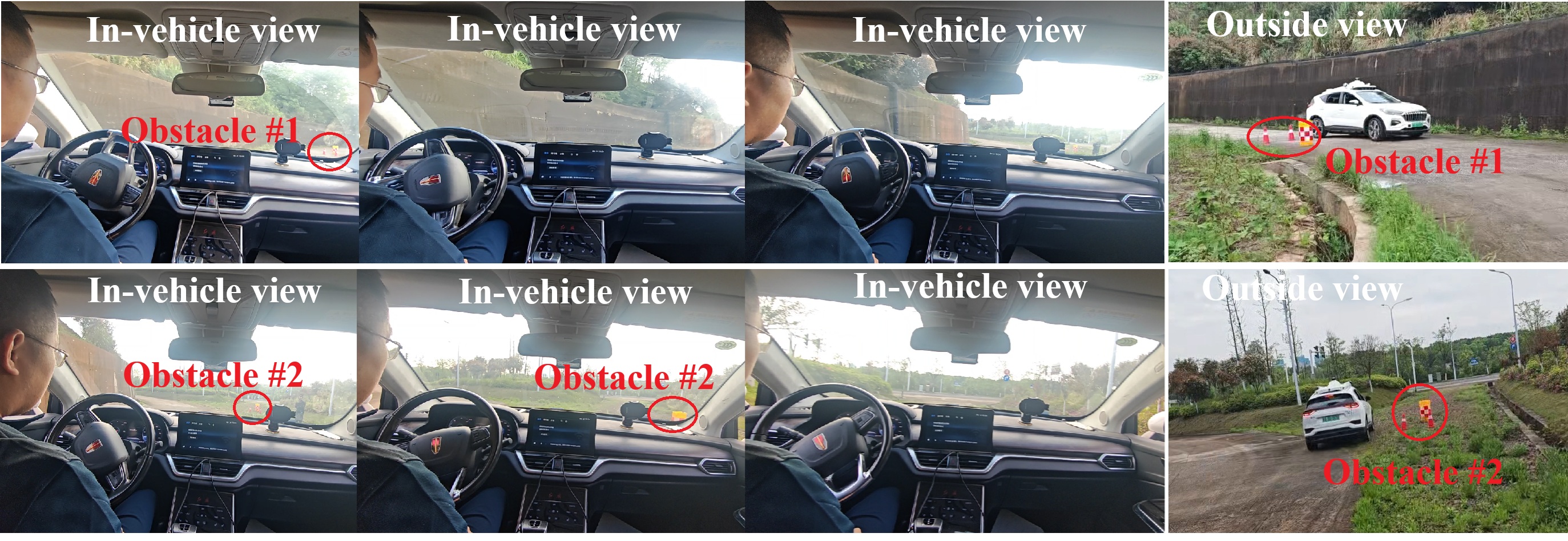}
        \label{fig:real_world_expe_sc2}
    }
    \caption{In-vehicle and outside-view snapshots of the proposed method during the two real-world test scenarios.}
    \label{fig:real_world_expe_all}
\end{figure*}

\textcolor{black}{Under the same experimental protocol, CBF-MPC achieved a larger safety index of $I_s=1.259$, but its real-time performance deteriorated substantially with $I_c=2.23$, as well as a worse comfort index with $I_{\rho}=0.0379$. These results indicate that, although the baseline method maintained a larger longitudinal safety margin in this test, the proposed method provided a more favorable overall balance among safety, real-time feasibility, and ride comfort, while preserving stable and smooth obstacle-avoidance behavior in the real-vehicle experiment.}

\subsubsection{Test Scenario 2--obstacle avoidance on a curved-road scenario with surface transition}

\textcolor{black}{The second experiment considered a more challenging obstacle-avoidance task in an unseen real-world scenario that is only partially covered by the training-data distribution. Unlike Test Scenario~1 on a straight structured road, this test involves two successive obstacle-avoidance maneuvers along a curved road with continuously varying curvature and a paved-to-unpaved gravel transition. As shown in Fig.~\ref{fig:real_test_2}, the first maneuver is completed on the paved segment, whereas the second is executed after entering the gravel segment. The in-vehicle and outside-view snapshots in Fig.~\ref{fig:real_world_expe_all}~(b) further illustrate the key stages of the maneuver and the corresponding obstacle locations. These road-geometry and surface changes introduce additional disturbances and sim-to-real mismatch, making this scenario a more demanding test of collision-avoidance robustness and closed-loop stability.}

\begin{figure*} 
    \centering 
    \includegraphics[width=1.8\columnwidth]{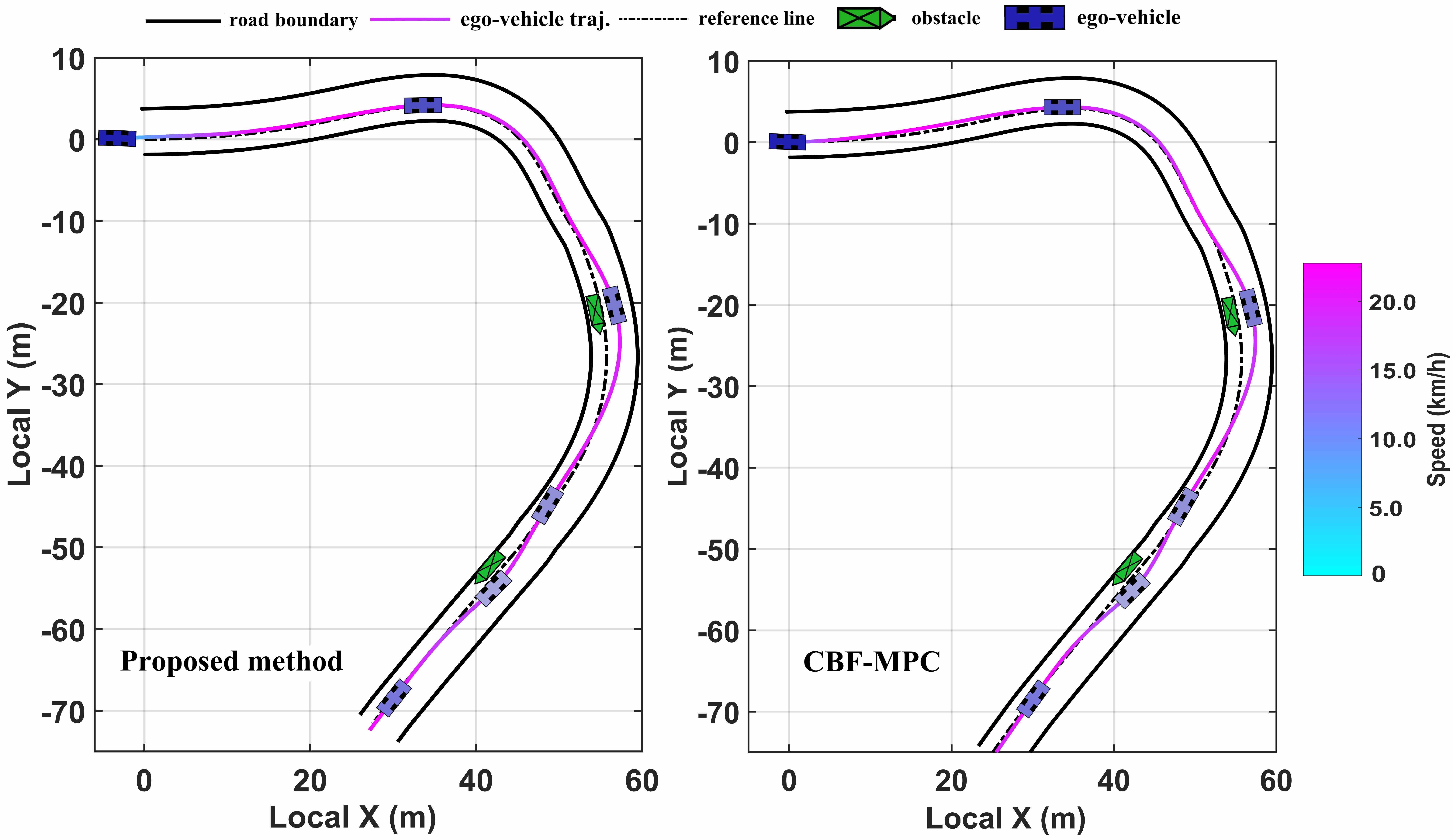} 
    \caption{Real-world experiments: vehicle obstacle avoidance trajectories on a curved road with surface transition (Proposed vs. CBF-MPC).} 
    \label{fig:real_test_2} 
\end{figure*}

\textcolor{black}{The proposed method completed the maneuver safely and without loss of stability. According to Table~\ref{tabRHDHP:real_world_metrics}, it achieved a safety index of $I_s=0.301$, a real-time index of $I_c=0.104$, and a comfort index of $I_{\rho}=0.1218$. The maximum lateral acceleration during the maneuver was approximately $2.5~\mathrm{m/s^2}$, indicating that the real-vehicle test already involved a noticeable lateral dynamic response rather than a purely kinematic operating condition.}
\textcolor{black}{Fig.~\ref{fig:real_test_2} shows that the proposed method maintains a smooth and spatially consistent avoidance trajectory throughout the maneuver, with no evident oscillatory correction or abrupt path distortion near the obstacle locations, even after entering the gravel segment. Under the same experimental protocol, CBF-MPC also completed the task and achieved a safety index close to that of the proposed method, but its real-time performance was substantially worse and its trajectory smoothness was slightly inferior. Together with the results in Table~\ref{tabRHDHP:real_world_metrics}, these observations indicate that the proposed method provides a more favorable overall balance among safety, real-time feasibility, and trajectory smoothness in this unseen scenario.}

\subsection{Discussion and Limitations}

\subsubsection{Sim-to-Real Transfer and Operating-Condition Changes}
\textcolor{black}{
The deep Koopman predictor provides an efficient lifted-space surrogate for nonlinear vehicle dynamics, while remaining a data-driven approximation of the real system. In the proposed framework, the effect of model mismatch is alleviated by the receding-horizon closed-loop implementation: only the first control input is applied at each sampling instant, and the optimization is repeated using updated onboard sensing information. In addition, the actor--critic structure yields an explicit state-feedback policy within each prediction interval, which further improves online correction capability against prediction mismatch.
}

\textcolor{black}{
The real-world experiments on the HongQi-EHS3 platform provide practical evidence of transferability from simulation-trained modeling/control to real-vehicle scenarios. The tested cases include structured-road obstacle avoidance as well as obstacle avoidance on a curved road containing a transition between structured pavement and unstructured gravel surface, with vehicle speed reaching up to 40 km/h. Together, these experiments extend the empirical validation of the proposed method to more challenging road-surface conditions, road curvature, and higher-speed operation.
}

\subsubsection{Current Limitations}
\textcolor{black}{
Several limitations of the present study should be noted. First, the effectiveness of the proposed LPC framework remains related to the operating domain covered by the training and validation data, since the deep Koopman predictor is still a lifted-space approximation of the underlying vehicle dynamics. As a result, prediction accuracy and control performance may degrade when the vehicle operates far outside the data-supported regime. Moreover, the road-boundary and obstacle terms are introduced as soft safety-shaping penalties, which improve safety awareness in the policy-learning process and enlarge empirical safety margins. However, the current framework provides a practical safety-enhancement mechanism rather than a strict safety certificate for the real system under model mismatch and disturbances. Finally, although the current experiments cover representative changes in road geometry, surface condition, and speed, the present validation is still limited in scope. Broader evaluation under more diverse operating conditions, such as larger variations in road friction, vehicle mass/loading, and higher-speed maneuvers, would further strengthen the assessment of the proposed framework.
}

\section{Conclusion and Future Work}

This paper proposed a learning predictive control (LPC) framework with deep Koopman operators for autonomous-vehicle motion planning in dynamic environments. By employing a deep Koopman vehicle model, the framework lifts nonlinear vehicle dynamics into a high-dimensional observable space with an approximately linear prediction structure, which supports efficient receding-horizon optimization. In contrast to conventional MPC, which computes open-loop control sequences online, the proposed method combines actor--critic learning with receding-horizon planning to generate a closed-loop state-feedback policy within each prediction interval. To handle non-convex environmental constraints, the framework further integrates an enhanced convex feasible set construction and potential-field-based safety shaping into the planning and policy-learning process.

\textcolor{black}{Extensive simulations in static and dynamic obstacle-avoidance scenarios on straight and curved roads demonstrate that the proposed method achieves favorable performance in safety, driving comfort, and computational efficiency relative to benchmark methods such as CBF-MPC and LMPCC. Real-vehicle experiments on the HongQi-EHS3 platform further verify the practical applicability of the proposed method, showing stable obstacle-avoidance performance in both a structured straight-road scenario and a more challenging curved mixed-surface scenario.}
\textcolor{black}{Future work will focus on adaptive tuning of safety-shaping parameters, broader validation under more challenging dynamic-traffic scenarios and road/surface conditions, and further improvement of robustness to model mismatch and operating-condition variations.}

\bibliographystyle{unsrt}
\bibliography{ref}

\appendix
\subsection{Learning Convergence in Each Prediction Interval}
\label{app:inner_loop_convergence}
\setcounter{equation}{0}
\renewcommand{\theequation}{A.\arabic{equation}}

In the following, we analyze the convergence of the policy
learning procedure in each prediction interval. For simplicity, let
\[
B_{b,\tau}=B_b(d_{b,\tau}), \qquad B_{o,\tau}=B_o(\phi(x_\tau)),
\]
then the stage cost function can be expressed as
\begin{equation}
r_{\tau}^{(i)}(\phi_{e,\tau},u_\tau)
=
\phi_{e,\tau}^{\top}Q\phi_{e,\tau}
+
\bigl(u_{\tau}^{(i)}\bigr)^{\top}Ru_{\tau}^{(i)}
+
\gamma_b B_{b,\tau}
+
\gamma_o B_{o,\tau},
\label{eq:A1}
\end{equation}
where $B_{b,\tau}$ and $B_{o,\tau}$ are defined by Eq. \eqref{equRHDHP:bound_barrier_function} and Eq.~\eqref{equRHDHP:obs_barrier_function}, respectively. 
\begin{lemma}[\cite{xu2018learning}]
  For any time step $\tau \in [k,k+N_p-1]$ in the prediction horizon, let the value function sequence $V^{(i)}$ be defined by Eq.~\eqref{equRHDHP:value_iteration} with initial value function $V^{(0)}=0$. Then, there exists an admissible control input $u_\tau$ and an upper bound $\bar{Z}_\tau$ such that
\[
0 \le V_\tau^{(i)} \le V_\tau^{*} \le \bar{Z}_\tau .
\]  
\end{lemma}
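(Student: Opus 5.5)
The proof follows the standard value-iteration argument of \cite{xu2018learning}, adapted to the finite prediction interval $[k,k+N_p]$ with terminal cost $V_f$. It has three parts: nonnegativity, monotonicity of the iterates toward $V^*_\tau$, and an upper bound obtained from an admissible policy.

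\emph{Nonnegativity.} By (\ref{eq:A1}), every stage cost is nonnegative:
\begin{itemize}
\item $Q\succ0$ and $R\succ0$, so the two quadratic terms are nonnegative;
\item $\mathcal{B}_o>0$, since each summand in (\ref{equRHDHP:obs_barrier_function}) is $1/(G_i^2+\epsilon_{\phi,i})$;
\item $\mathcal{B}_b\ge 0$ on the feasible strip $d_{b,\min}<d_b<d_{b,\max}$, because each logarithm has argument $s/(s+1)\in(0,1)$.
\end{itemize}
The terminal cost $V_f$ is nonnegative for the same reasons. Since $V^{(0)}=0$, induction on $i$ through (\ref{equRHDHP:value_iteration}) gives $V^{(i)}_\tau\ge0$.

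\emph{Upper bound by $V^*_\tau$.} I will show $V^{(i)}_\tau\le V^*_\tau$ by induction on $i$, using the minimization in (\ref{equRHDHP:policy_improvement}). The base case is $V^{(0)}=0\le V^*_\tau$, which holds because $V^*_\tau\ge0$ by the same positivity argument. For the step, suppose $V^{(i)}_{\tau+1}\le V^*_{\tau+1}$ pointwise. Then
\[
V^{(i+1)}_\tau=\min_u\bigl(r+V^{(i)}_{\tau+1}\bigr)\le \min_u\bigl(r+V^{*}_{\tau+1}\bigr)=V^*_\tau .
\]
At the terminal index, $V^{(i+1)}_{k+N_p}=V_f=V^*_{k+N_p}$. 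Because the horizon is finite, the induction can also be run backward in $\tau$ from $k+N_p$. This shows that after at most $N_p+1$ sweeps the iterates coincide with $V^*_\tau$, which also gives convergence, though the lemma only needs the inequality.

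\emph{Existence of an admissible control and $\bar Z_\tau$.} This is the main obstacle. Pick any admissible feedback $u_\tau=\mu(\phi_\tau)\in\mathbb U$ whose predicted lifted trajectory under (\ref{equDDK:DDK}) stays strictly inside the road strip and keeps each $G_i$ bounded away from the singular set. Natural candidates are the saturated law (\ref{equRHDHP:policy_improvement1}) or simply the input of the previous receding-horizon solution. Define $\bar Z_\tau$ as the cost-to-go of this policy, i.e. the finite sum of stage costs plus $V_f$. Optimality gives $V^*_\tau\le\bar Z_\tau$.

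Finiteness of $\bar Z_\tau$ needs two facts. First, the linear dynamics have bounded $A,B$ and the input set $\mathbb U$ is bounded, so $\phi_e$ stays bounded over $N_p$ steps and the quadratic terms are bounded. Second, the barrier terms are bounded: $\mathcal B_o\le m_o/\min_i\epsilon_{\phi,i}$ always, and $\mathcal B_b$ is finite as long as the trajectory stays in the strip. The genuinely delicate point is guaranteeing such an admissible trajectory exists. I would assume feasibility of the current state, as in \cite{xu2018learning}, so that at least one strip-respecting input sequence exists over the horizon.
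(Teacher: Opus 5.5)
Your proof is correct, but it takes a different route from the one the paper relies on. The paper does not prove this lemma itself; it imports it from \cite{xu2018learning}. The argument it reproduces in Claims 1)--2) of Theorem~\ref{theo:1} is the standard ADP comparison:
run the recursion $Z^{(i+1)}_\tau=r(\phi_{e,\tau},\eta_\tau)+Z^{(i)}_{\tau+1}$ under a fixed admissible policy $\eta$;
observe that $V^{(i)}_\tau\le Z^{(i)}_\tau$, because $V^{(i)}$ uses the minimizing input;
then unroll $Z^{(i)}_\tau$ into partial sums of nonnegative stage costs, see \eqref{eq:A4}--\eqref{eq:A5}, which are dominated by the full-horizon cost $\bar Z_\tau$ of $\eta$.
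In that scheme the middle inequality is obtained by specializing $\eta$ to the optimal policy. You instead obtain $V^{(i)}_\tau\le V^*_\tau$ directly from the monotonicity of the minimization in the HJB recursion, and $V^*_\tau\le\bar Z_\tau$ from optimality, with no auxiliary sequence.

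Your route exploits the finite horizon and yields more. Running the induction backward in $\tau$ gives $V^{(i)}_\tau=V^*_\tau$ for all $i\ge k+N_p-\tau+1$, i.e.\ exact convergence with the limit identified. Claim 3) of Theorem~\ref{theo:1}, by contrast, only shows that the sequence is monotone and bounded and then asserts that the limit is $V^*_\tau$. The comparison route, in turn, does not need $V^*_\tau$ as an a priori object. It also carries over unchanged to infinite-horizon value iteration, where finite termination is unavailable.

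You also make explicit what the paper leaves implicit. $\mathcal{B}_o\le m_o/\min_i\epsilon_{\phi,i}$ is always finite, while $\mathcal{B}_b$ is nonnegative and finite only inside the road strip, so the value functions should be understood as defined there. The existence of an admissible control genuinely requires a feasibility hypothesis on the current state, and Theorem~\ref{theo:1} presupposes the same thing when it takes an admissible $\eta$. This is a needed standing assumption, not a gap in your argument.
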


\begin{theorem}\label{theo:1}
[Convergence]
For the motion control optimization problem \eqref{equRHDHP:planning_optimization1} based on the deep Koopman vehicle dynamics model Eq.~\eqref{equDDK:DDK}, let $u^{(i)}$ and $V^{(i)}$ be defined by Eq.~\eqref{equRHDHP:value_iteration}, with initial condition $V_\tau^{(0)}=0$, $\tau \in [k,k+N_p]$. Then, the following hold:

\begin{enumerate}
    \item If $Z_\tau^{(0)}=V_\tau^{(0)}$, and $V_\tau^{(i)}$ satisfies Eq.~\eqref{equRHDHP:value_iteration}, then for any admissible control strategy $\eta_\tau^{(i)}$,
    \[
    Z_\tau^{(i+1)}
    =
    r^{(i)}(\phi_{e,\tau},\eta_\tau)
    +
    Z_{\tau+1}^{(i)},
    \qquad
    Z_\tau^{(i)} \ge V_\tau^{(i)} .
    \]

    \item The value function $V_\tau^{(i)}$ has an upper bound $\bar{V}_\tau$, such that
    \[
    0 \le V_\tau^{(i)} \le \bar{V}_\tau .
    \]

    \item As the iteration count $i \to \infty$, the value function and control strategy converge, i.e.,
    \[
    V_\tau^{(i)} \to V_\tau^{*},
    \qquad
    \lambda_\tau^{(i)} \to \lambda_\tau^{*},
    \qquad
    u_\tau^{(i)} \to u_\tau^{*}.
    \]
\end{enumerate}
\end{theorem}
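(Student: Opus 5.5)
The proof follows the standard value-iteration argument of \cite{xu2018learning} and the earlier Lemma, adapted to the finite prediction interval $[k,k+N_p]$. The main tool is induction on the iteration index $i$, combined with backward induction in $\tau$ from the terminal condition $V_{k+N_p}^{(i+1)}=V_f$. Throughout, the stage cost $r$ in \eqref{eq:A1} is nonnegative. This holds because $Q,R\succ0$, $\mathcal{B}_o>0$ by \eqref{equRHDHP:obs_barrier_function}, and $\mathcal{B}_b\ge 0$, since each logarithm in \eqref{equRHDHP:bound_barrier_function} has argument in $(0,1)$ on the feasible set.

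For item 1, I would argue by induction on $i$. The base case holds because $Z_\tau^{(0)}=V_\tau^{(0)}=0$. For the step, assume $Z_{\tau+1}^{(i)}\ge V_{\tau+1}^{(i)}$. Since $u_\tau^{(i)}$ minimizes $r(\phi_{e,\tau},u)+V_{\tau+1}^{(i)}$ by \eqref{equRHDHP:policy_improvement}, we get
\[
V_\tau^{(i+1)}=\min_u\bigl(r+V_{\tau+1}^{(i)}\bigr)\le r(\phi_{e,\tau},\eta_\tau)+V_{\tau+1}^{(i)}\le r(\phi_{e,\tau},\eta_\tau)+Z_{\tau+1}^{(i)}=Z_\tau^{(i+1)}.
\]
At the terminal index both sequences equal $V_f$, so the backward induction in $\tau$ closes.

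For item 2, I would choose $\eta_\tau$ to be a fixed admissible policy, independent of $i$, that keeps the predicted lifted trajectory in the interior of the safe sets. The earlier Lemma supplies such a policy. Unrolling the $Z$ recursion over the finite horizon gives
\[
Z_\tau^{(i)}\le \sum_{s=\tau}^{k+N_p-1} r(\phi_{e,s},\eta_s)+V_f=:\bar V_\tau<\infty,
\]
and this bound is independent of $i$. Item 1 then yields $0\le V_\tau^{(i)}\le\bar V_\tau$, with nonnegativity coming from $r\ge0$ and $V^{(0)}=0$.

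For item 3, I would first show monotonicity, $V_\tau^{(i)}\le V_\tau^{(i+1)}$, using an auxiliary sequence $\Lambda_\tau^{(i+1)}=r(\phi_{e,\tau},u_\tau^{(i+1)})+\Lambda_{\tau+1}^{(i)}$ with $\Lambda^{(0)}=0$. A joint induction gives $\Lambda^{(i)}\le V^{(i+1)}$, and item 1 gives $V^{(i)}\le \Lambda^{(i)}$. Together these show that the sequence is nondecreasing and bounded, so it converges to some limit $V_\tau^\infty$.

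The finite horizon makes the limit simpler than in the infinite-horizon case. Once $i\ge k+N_p-\tau$, the terminal value $V_f$ has propagated back to $\tau$, and $V_\tau^{(i)}$ coincides with the dynamic-programming solution of \eqref{equRHDHP:optimal_value}. Hence $V_\tau^\infty=V_\tau^*$, and in fact the convergence happens in finitely many steps, which is consistent with $V^{(i)}\le V^*$ from the Lemma. Passing to the limit in the Bellman recursion then identifies $V^\infty$ with $V^*$.

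The main obstacle is the convergence of the costates and controls. For $\lambda$, I would differentiate the recursion \eqref{equRHDHP:value_iteration} and use the linear Koopman dynamics to obtain \eqref{equRHDHP:value_iteration1}. Stabilization in finitely many steps then gives $\lambda_\tau^{(i)}\to\lambda_\tau^*$ from \eqref{equRHDHP:optimal_lambda}. One caveat is that \eqref{equRHDHP:value_iteration1} contains $\Gamma$, so I would take $\Gamma=1$ to match \eqref{equRHDHP:optimal_lambda}. A second caveat is that the gradients $\bm\beta$ must be well defined, which requires $G_i^2+\epsilon>0$ and strict interiority for $\mathcal{B}_b$. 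For $u$, the control is the componentwise saturation of $-\tfrac12R^{-1}B^\top\lambda_{\tau+1}^{(i)}$ from \eqref{equRHDHP:policy_improvement1}. This map is continuous and Lipschitz in $\lambda$, so $u_\tau^{(i)}\to u_\tau^*$ follows. The component-wise clipping equals the constrained minimizer only when $R$ is diagonal, as it is in the paper, and I would state this assumption explicitly.
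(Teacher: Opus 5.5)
Your proof is correct and follows the paper's argument: item 1 from the minimizing property of $u^{(i)}$, item 2 by unrolling the $Z$ recursion over the finite horizon under a fixed admissible policy, and item 3 via your auxiliary sequence $\Lambda$ (the paper's $Y$), sandwiched as $V^{(i)}\le\Lambda^{(i)}\le V^{(i+1)}$ and combined with boundedness. Your extra observation that the exact iteration coincides with backward dynamic programming after finitely many steps is what actually identifies the limit as $V_\tau^*$, a step the paper only asserts; the threshold is $i\ge k+N_p-\tau+1$ rather than $i\ge k+N_p-\tau$, since $V^{(0)}_{k+N_p}=0$. That same observation gives $\lambda_\tau^{(i)}=\lambda_\tau^*$ and $u_\tau^{(i)}=u_\tau^*$ directly from the argmin in \eqref{equRHDHP:policy_improvement}, so the detour through the clipping formula, with its $\Gamma=1$ and diagonal-$R$ caveats, is unnecessary.
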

\textit{Proof:}
First, we prove Claim 1). Since $\eta^{(i)}$ is any admissible control strategy, and $u^{(i)}$ minimizes the right-hand side of Eq.~\eqref{equRHDHP:policy_improvement}, under the same conditions, we have
\[
Z_\tau^{(i)} \ge V_\tau^{(i)} .
\]

Next, we prove Claim 2). Let $Z_\tau$ represent the value function corresponding to any admissible control strategy $\eta_\tau$, with
\[
Z_\tau^{(0)} = V_\tau^{(0)} = 0.
\]
The update rule for $V_\tau^{(i)}$ is given by Eq.~\eqref{equRHDHP:value_iteration}, while the update for $Z_\tau^{(i)}$ follows
\[
Z_\tau^{(i+1)}
=
r^{(i)}(\phi_\tau,\eta_\tau)+Z_{\tau+1}^{(i)}.
\]
Thereby, we have
\begin{equation}
Z_\tau^{(i+1)} - Z_\tau^{(i)}
=
Z_{\tau+1}^{(i)} - Z_{\tau+1}^{(i-1)}
=
\cdots
=
Z_{\tau+i}^{(1)} - Z_{\tau+i}^{(0)},
\label{eq:A2}
\end{equation}
yielding
\begin{equation}
Z_\tau^{(i+1)}
=
Z_{\tau+i}^{(1)} + Z_\tau^{(i)} .
\label{eq:A3}
\end{equation}

For $i \le k+N_p-1-\tau$, by expanding $Z_\tau^{(i)}$, we obtain
\begin{equation}
\begin{aligned}
Z_\tau^{(i+1)}
&=
Z_{\tau+i}^{(1)} + Z_{\tau+i-1}^{(1)} + Z_{\tau+i-2}^{(1)} + \cdots + Z_\tau^{(1)} \\
&=
\sum_{j=0}^{i} Z_{\tau+j}^{(1)}
=
\sum_{j=0}^{i} r(\phi_{\tau+j},\eta_{\tau+j}) .
\end{aligned}
\label{eq:A4}
\end{equation}

For $i > k+N_p-1-\tau$, $\tau \in [k,k+N_p]$, under the feasible strategy $\eta$, it is easy to obtain
\begin{equation}
Z_\tau^{(i+1)}
=
\sum_{j=0}^{k+N_p-1-\tau} r(\phi_{\tau+j},\eta_{\tau+j})
+
V_f(\phi_{k+N_p}) .
\label{eq:A5}
\end{equation}

Combining \eqref{eq:A4} and \eqref{eq:A5}, for any $i \in \mathbb{N}_{\ge 1}$, we have
\begin{equation}
Z_\tau^{(i+1)} \le \bar{V}_\tau .
\label{eq:A6}
\end{equation}
Based on Claim 1), for any $i \in \mathbb{N}_{\ge 1}$, we further have
\begin{equation}
V_\tau^{(i+1)}
\le
Z_\tau^{(i+1)}
\le
\bar{V}_\tau .
\label{eq:A7}
\end{equation}

Now, we prove Claim 3). Define
\[
Y_\tau^{(0)} = V_\tau^{(0)} = 0,
\]
and introduce $Y_\tau^{(i)} \le V_\tau^{(i+1)}$. When $i=0$, we have
\begin{equation}
V_\tau^{(1)} - Y_\tau^{(0)}
=
\phi_{e,\tau}^{\top}Q\phi_{e,\tau}
+
\gamma_b B_{b,\tau}
+
\gamma_o B_{o,\tau},
\label{eq:A8}
\end{equation}
such that
\[
V_\tau^{(1)} \ge Y_\tau^{(0)} .
\]

Assume that for all $\tau \in [k,k+N_p-1]$, we have
\[
V_\tau^{(i)} \ge Y_\tau^{(i-1)} .
\]
Since
\begin{equation}
\begin{aligned}
Y_\tau^{(i)}
&=
\phi_{e,\tau}^{\top}Q\phi_{e,\tau}
+
\bigl(u_\tau^{(i)}\bigr)^{\top}Ru_\tau^{(i)}
+
\gamma_b B_{b,\tau}
+
\gamma_o B_{o,\tau}
+
Y_{\tau+1}^{(i-1)}, \\
V_\tau^{(i+1)}
&=
\phi_{e,\tau}^{\top}Q\phi_{e,\tau}
+
\bigl(u_\tau^{(i)}\bigr)^{\top}Ru_\tau^{(i)}
+
\gamma_b B_{b,\tau}
+
\gamma_o B_{o,\tau}
+
V_{\tau+1}^{(i)},
\end{aligned}
\label{eq:A9}
\end{equation}
it follows that
\begin{equation}
V_\tau^{(i+1)} - Y_\tau^{(i)}
=
V_{\tau+1}^{(i)} - Y_{\tau+1}^{(i-1)}
\ge 0 .
\label{eq:A10}
\end{equation}
Then we further have
\[
V_\tau^{(i+1)} \ge Y_\tau^{(i)} .
\]

According to conclusion 1), we have
\[
V_\tau^{(i)} \le Y_\tau^{(i)},
\]
and therefore
\begin{equation}
V_\tau^{(i)} \le Y_\tau^{(i)} \le V_\tau^{(i+1)} .
\label{eq:A11}
\end{equation}
This means that
\[
V_\tau^{(i)} \le V_\tau^{(i+1)} .
\]
Combined with conclusion 2), $\{V_\tau^{(i)}\}$ is a non-decreasing and bounded sequence. Thus, when $i \to \infty$,
\[
V_\tau^{(i)} \to V_\tau^{*}.
\]
Consequently, we can show that
\[
\lambda_\tau^{(\infty)}
=
\frac{\partial V_\tau^{(\infty)}}{\partial \phi_\tau}
\to
\frac{\partial V_\tau^{*}}{\partial \phi_\tau}
=
\lambda_\tau^{*},
\qquad
u_\tau^{(i)} \to u_\tau^{*}.
\]
\hfill$\blacksquare$
\subsection{Safety Guarantee} 
{\color{black} Note that the constraint satisfaction result follows directly from Theorem~\ref{theo:1}. Specifically, as $\epsilon \to 0$, the potential function  $\mathcal{B} _o\left( \boldsymbol{\hat{\phi}} \left( x_{\tau} \right) \right)\rightarrow +\infty$ when the state approaches the obstacle boundary, i.e., $\bar{C}\boldsymbol{\hat{\phi} }\left( x_{\tau} \right) \in \partial\mathbb{F} _{\tau}$ for all $\tau\in[k,k+N_p-1]$. Meanwhile, Theorem~\ref{theo:1} ensures that the optimal value $V_{\tau}$ remains bounded. Consequently, the optimal solution $\bar{C}\boldsymbol{\hat{\phi} }\left( x_{\tau} \right) \in \mathbb{F} _{\tau}$.
However, this does not guarantee that the true state satisfies $\bar{C}\boldsymbol{\phi}(x_k) \in \mathbb{F}_k$ at every time instant $k$, due to model mismatch introduced by the Koopman approximation. One practical way to bridge this gap is to construct the potential function with appropriately chosen parameters $g_{i,k}$ to enlarge the safety margin and thus enhance robustness against modeling errors.
\begin{remark}
    Note that our approach is implemented in a receding-horizon manner, which naturally induces a closed-loop system. Within each prediction interval, constraint satisfaction is promoted via the designed potential function, ensuring that the predicted trajectory remains within the safe set. However, establishing closed-loop guarantees, such as stability or robustness for the true system, is challenging in the presence of model mismatch introduced by the learned Koopman representation and the time-varying feasible set induced by complex obstacles. In particular, such guarantees typically require strong assumptions, e.g., the existence of invariant safe sets that do not intersect with obstacles over the entire trajectory. In this case, one can readily derive the closed-loop robustness results following the line in~\cite{zhang2024model,zhang2025toward}.
    
Instead, this work focuses on ensuring safety at the planning level with an enlarged margin, which provides a practical robustness mechanism against modeling errors. The resulting closed-loop behavior is validated through extensive simulations and real-world experiments, demonstrating reliable and safe performance in complex environments.
\end{remark}
}

\end{document}